\newcommand{\ie}{\textit{i.e.}, }
\newcommand{\eg}{\textit{e.g.}, }
\def\eqref#1{equation~\ref{#1}}
\def\Eqref#1{Equation~\ref{#1}}
\def\1{\bm{1}}
\def\va{{\bm{a}}}
\def\vh{{\bm{h}}}
\def\vk{{\bm{k}}}
\def\vm{{\bm{m}}}
\def\vv{{\bm{v}}}
\def\mC{{\bm{C}}}
\def\mI{{\bm{I}}}
\def\mK{{\bm{K}}}
\def\mP{{\bm{P}}}
\def\mV{{\bm{V}}}
\def\mW{{\bm{W}}}
\DeclareMathAlphabet{\mathsfit}{\encodingdefault}{\sfdefault}{m}{sl}
\SetMathAlphabet{\mathsfit}{bold}{\encodingdefault}{\sfdefault}{bx}{n}
\DeclareMathOperator*{\argmax}{arg\,max}
\DeclareMathOperator*{\argmin}{arg\,min}
\newcommand{\tar}[1]{\texttt{{\color{red}{#1}}}}
\newcommand{\std}[1]{\small{$\pm$#1}}
\theoremstyle{plain}
\newtheorem{theorem}{Theorem}[section]
\theoremstyle{definition}
\theoremstyle{remark}
\icmltitlerunning{AnyEdit: Edit Any Knowledge Encoded in Language Models}
\begin{document}

\twocolumn[
\icmltitle{AnyEdit: Edit Any Knowledge Encoded in Language Models}  



\icmlsetsymbol{equal}{*}

\icmlsetsymbol{corr}{*}
\begin{icmlauthorlist}
\icmlauthor{Houcheng Jiang}{ustc}
\icmlauthor{Junfeng Fang}{nus,corr}
\icmlauthor{Ningyu Zhang}{zju}
\icmlauthor{Mingyang Wan}{byte}
\\
\icmlauthor{Guojun Ma}{byte}
\icmlauthor{Xiang Wang}{ustc}
\icmlauthor{Xiangnan He}{ustc,corr}
\icmlauthor{Tat-Seng Chua}{nus}
\end{icmlauthorlist}

\icmlaffiliation{ustc}{MoE Key Lab of BIPC, University of Science and Technology of China}
\icmlaffiliation{nus}{National University of Singapore}
\icmlaffiliation{zju}{Zhejiang University}
\icmlaffiliation{byte}{Douyin Co., Ltd}
\icmlcorrespondingauthor{Junfeng Fang}{fangjf1997@gmail.com}
\icmlcorrespondingauthor{Xiangnan He}{ xiangnanhe@gmail.com}

\icmlkeywords{Machine Learning, ICML}

\vskip 0.3in
]



\printAffiliationsAndNotice{*Corresponding Author.}   

\begin{abstract}
Large language models (LLMs) often produce incorrect or outdated information, necessitating efficient and precise knowledge updates. Current model editing methods, however, struggle with long-form knowledge in diverse formats, such as poetry, code snippets, and mathematical derivations. These limitations arise from their reliance on editing a single token’s hidden state, a limitation we term as ``efficacy barrier''. To solve this, we propose \textbf{AnyEdit}, a new autoregressive editing paradigm. It decomposes long-form knowledge into sequential chunks and iteratively edits the key token in each chunk, ensuring consistent and accurate outputs. Theoretically, we ground AnyEdit in the Chain Rule of Mutual Information, showing its ability to update any knowledge within LLMs. Empirically, it outperforms strong baselines by 21.5\% on benchmarks including UnKEBench, AKEW, and our new \textbf{EditEverything} dataset for long-form diverse-formatted knowledge. Additionally, AnyEdit serves as a plug-and-play framework, enabling current editing methods to update knowledge with arbitrary length and format, significantly advancing the scope and practicality of LLM knowledge editing. Our code is available at: \url{https://github.com/jianghoucheng/AnyEdit}.
\end{abstract}

\section{Introduction}

\begin{figure*}[t]
    \centering
    \includegraphics[width=1.01\linewidth]{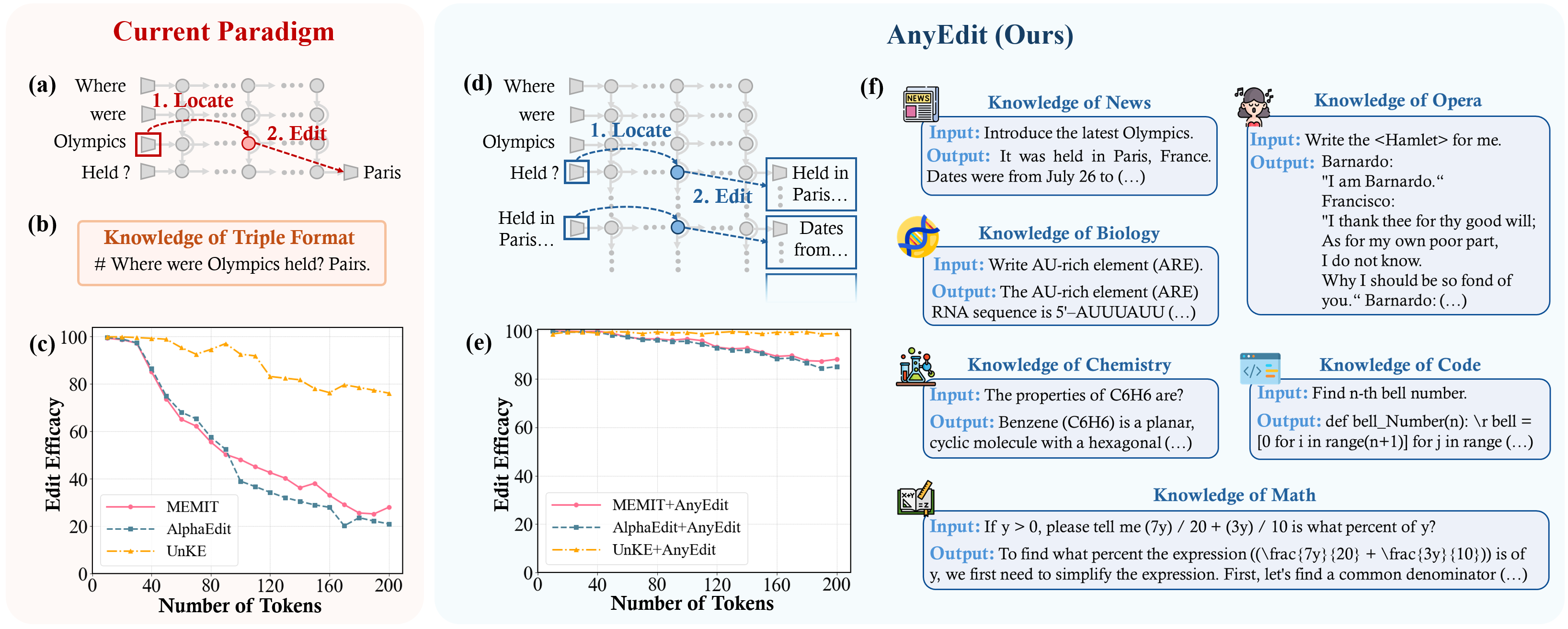}
    \vspace{-10pt}
    \caption{Comparison of current methods and our AnyEdit. (a) and (d) illustrate the editing processes; (c) and (e) show the editing efficacy as the number of tokens within the to-be-updated knowledge increases; (b) and (f) depict the type of knowledge that each method can edit.}
    \label{fig:intro}
\end{figure*}

Large language models (LLMs) have achieved impressive success by learning and storing vast amounts of knowledge \cite{GPT3, gpt2-xl,survey-llm}. However, they often suffer from hallucinations, producing incorrect or outdated information \cite{KE,MEND}. For instance, when queried with ``Where were the latest Olympics held?'', an LLM often provides an outdated response ``Tokyo'',  instead of the correct, updated answer ``Paris''. While retraining or fine-tuning can mitigate these issues, such approaches are resource-intensive and risk overfitting \cite{SERAC,ROME}. To overcome this, \textit{model editing} has emerged as a promising alternative. As illustrated in Figure \ref{fig:intro} (b), it typically begins by constructing a (subject, relation, object) triplet, such as (Olympics, were held in, Paris), to represent the knowledge to be updated. It then follows a locate-then-edit paradigm as Figure \ref{fig:intro} 
 (a) shows: (1) Locate the key token in the input prompt (\eg ``Olympics'') and the influential layers using causal tracing; (2) Edit the hidden states of the key token within these layers to align the model's output with the desired knowledge update (\eg modifying ``Tokyo'' to ``Paris''). This approach enables precise and efficient updates without the need for full-scale retraining or fine-tuning, showing the potential to model dynamic and evolving knowledge.

Despite their success, existing model editing methods mostly face significant limitations in the length and diversity of the knowledge they can update. As shown in Figure \ref{fig:intro} (c), even leading methods like AlphaEdit \cite{AlphaEdit} and RECT \cite{RECT} struggle to handle updates exceeding 100 tokens. Worse still, most methods are restricted to the knowledge represented as structured (subject, relation, object) triples. However, real-world knowledge is often encoded in diverse formats (\eg mathematical derivations and code snippets as shown in Figure \ref{fig:intro} (f)) and could exceed the 100-token threshold \cite{AKEW}. These constraints are ill-suited for real-world scenarios, significantly narrowing the scope of model editing and hindering its broader advancement.

Here we first conduct an in-depth analysis to identify why current methods fail for \textbf{long-form diverse-formatted knowledge}. Considering Figure \ref{fig:intro} (a) again, existing methods typically rely on locating a single token, assuming that altering its hidden states will suffice to edit the LLM’s output (\ie enabling the model to generate desired outputs reflecting new knowledge). However, long-form diverse-formatted knowledge is inherently more complex and information-dense than a single triplet, often requiring the integration of multiple critical tokens and intricate interdependencies among their hidden states. Thus, altering just a single token's hidden state is insufficient to ensure consistent and accurate knowledge generation. We term this limitation as the \textit{efficacy barrier} of single-token editing\footnote{The ``Efficacy'' is the metric proposed by the model editing method ROME \cite{ROME},  aiming to evaluate the success rate of editing. For more details, please refer to Appendix \ref{app:exp_metric}.}, which is empirically validated in Section \ref{sec:lim}. As a consequence, existing methods remain constrained by the paradigm of locating a single token and, as a result, are unable to overcome the aforementioned limitation.

Hence, a critical question naturally arises for updating long-form diverse-formatted knowledge: ``\textit{Can multiple tokens be jointly located and edited to enable complex knowledge updates?}'' A straightforward solution is to directly extend single-token editing to multiple tokens. However, it risks interference or conflicts between hidden state perturbations, undermining the coherence of to-be-updated knowledge and causing the performance drop as Figure \ref{fig:intro} (c) showcases. In sight of this, we introduce \textbf{AnyEdit}, an autoregressive editing paradigm that enables collaborative multi-token updates. As illustrated in Figure \ref{fig:intro} (d), AnyEdit decomposes long-form knowledge into sequential chunks, treating each as independent sub-knowledge. During editing, we iteratively (1) locate the final token of the current chunk and (2) perturb its hidden states to maximize the likelihood of generating the subsequent chunk. Building on the Chain Rule of mutual information \cite{mutual_information} in Information Theory \cite{information}, we theoretically demonstrate that this autoregressive process ensures the generation of consistent, complete long-form knowledge. AnyEdit offers two key advantages:
(1) Adaptivity: The number of edited tokens can be adaptively adjusted with the knowledge length, avoiding redundant edits.
(2) Generality: It supports diverse knowledge formats (\eg poetry, code, math) by decoupling structure-specific constraints. By shifting from single-token to multi-token collaborative editing, AnyEdit overcomes the  challenge of efficacy barrier, demonstrating the potential for practical, updating complex knowledge of diverse formats.


To validate AnyEdit, we conducted a comprehensive evaluation comparing it with leading model editing methods (\eg MEMIT \cite{MEMIT}, AlphaEdit \cite{AlphaEdit}, and UnKE \cite{UnKE}) on the prevailing LLMs such as Llama3-8B-Instruct\footnote{\href{https://llama.meta.com/llama3}{https://llama.meta.com/llama3}} and Qwen2.5-7B-Chat \cite{qwen2.5}. Beyond standard benchmark datasets (\eg Counterfact \cite{ROME} and ZsRE \cite{MEMIT}) that represent knowledge as triples, we curate \textbf{EditEverything}, a new benchmark for long-form knowledge in diverse formats. As shown in Figure \ref{fig:intro} (f), this dataset includes entries up to 458 tokens --- over twice the length of the longest sequences in existing benchmarks (\eg 156 tokens in AKEW \cite{AKEW}) --- and spans multiple domains, including mathematics, news, code, and biochemistry. Results on EditEverything and standard benchmarks demonstrate that AnyEdit surpasses all baselines, achieving a 21.5\% average improvement in editing accuracy with comparable computational overhead. Furthermore, as the first autoregressive editing framework, AnyEdit also enables seamless integration with existing locate-then-edit methods, as shown in Figure \ref{fig:intro} (e). This plug-and-play capability equips traditional approaches with the ability to handle knowledge with arbitrary length and format, significantly broadening the scope and practicality of LLM knowledge editing.





\begin{figure*}[t]
    \centering
    \begin{minipage}[t]{0.47\textwidth}
        \centering
        \includegraphics[width=0.95\linewidth]{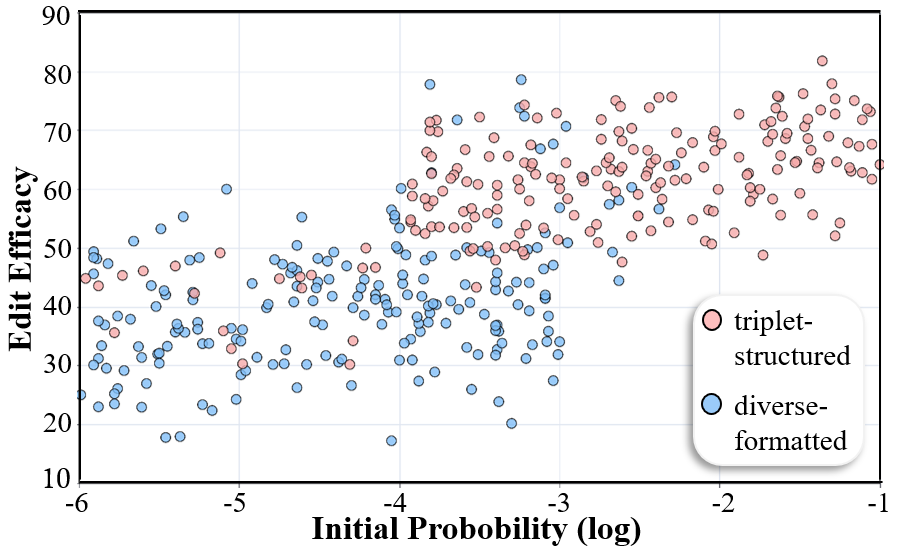}
        \vspace{-10pt}
        \caption{Relationship between knowledge format, original output probability, and efficacy when applying advanced editing methods to update triplet-structured and diverse-formatted knowledge. For each category, we randomly sample 200 knowledge instances to conduct experiments.}
        \label{fig:method1}
        \vspace{-10pt}
    \end{minipage}%
    \hfill
    \begin{minipage}[t]{0.473\textwidth}
        \centering
        \includegraphics[width=0.95\linewidth]{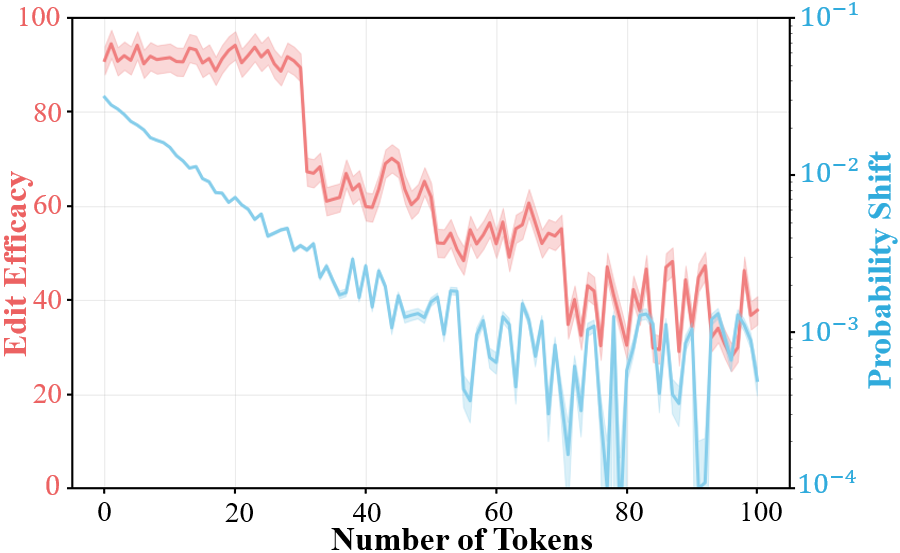}
        \vspace{-10pt}
        \caption{Relationship between the number of tokens in to-be-updated knowledge, probability shift under random perturbations, and efficacy. We conduct experiments by truncating the sampled knowledge instances to enable editing across different token lengths. The lighter-colored bands represent variance.}
        \label{fig:method2}
        \vspace{-10pt}
    \end{minipage}
\end{figure*}

\section{Preliminary} \label{sec:method:pre}
\textbf{Autoregressive LLMs.}
LLMs learn and store knowledge through autoregressive token prediction.
Formally, let $f$ denote a decoder-only LLM with $L$ layers processing the input sequence $X=(x_0,x_1,\cdots,x_T)$.
At layer $l$, the hidden state for token $x_t$ is computed via the forward propagation:
\begin{equation}
    \begin{aligned}
         \vh_t &= \vh_t^{ - 1} + \va_t + \vm_t, \\
         \va_t &= \text{Attention}(\vh_0^{ - 1}, \vh_1^{ - 1}, \ldots, \vh_t^{ - 1}), \\
         \vm_t &= \text{MLP}(\vh_t^{ - 1} + \va_t),
    \end{aligned}
\end{equation}
where $\vh_t$ and $\vh_t^{ - 1}$ denote the hidden states of token $x_t$ in the current and previous layers, respectively; $\va_t$ and $\vm_t$ are the outputs of the attention and MLP modules, respectively. 

\textbf{Model Editing in LLMs.} To update outdated or incorrect knowledge within $f$, model editing typically follows a locate-then-edit paradigm \cite{ROME}: (1) Locate the key token in the input prompt and the influential layers; (2) Edit the hidden states of the key token within these layers to modify the model’s output. Formally, let $(X, Y)$ denote the to-be-updated knowledge with the input prompt $X$ (\eg ``Where were the latest Olympics held?'') and the desired output $Y$ (\eg ``Paris.''). 
Suppose the key token is located at position $t$ in $X$.
Current methods typically perturb its hidden state $\vh_t$ by adding a residual term $\bm{\delta}$, which is  obtained via gradient descent to maximize the probability of generating $Y$ given $X$:
\begin{equation}
    \bm{\delta}= \argmin_{\bm{\hat{\delta}}} \left( -\log \mathbb{P}_{f(\vh_t+\bm{\hat{\delta}})} \left[Y \mid X\right] \right),\label{opt}
\end{equation}
where $\mathbb{P}_{f(\vh_t+\bm{\hat{\delta}})}$ represents the output probability when replacing $\vh_t$ with $\vh_t+\bm{\hat{\delta}}$ in the LLM.  Finally, the LLM parameters are updated such that, given the input $X$, the hidden state of the key token is aligned with $\vh_t+\bm{\delta}$. For more details, please refer to Appendix \ref{app:model_edit}. 

\section{Limitations of Single-token Editing} \label{sec:lim} 

Although \textbf{single-token editing} methods have been extensively studied in recent years \cite{AlphaEdit,wise,UnKE,RECT,AKEW}, we argue that they  suffer from updating long-form, diverse formatted knowledge.

\subsection{Editing Diverse-formatted Knowledge} \label{sec:method:lim:div}

According to \Eqref{opt}, the success of the single-token editing methods hinges on whether the LLM generates $Y$ given $X$ after applying the perturbation $\bm{\delta}$ to $\vh_t$.
In other words, the perturbation must significantly increase the probability of generating $Y$ over any other possible output. However, if the original probability of $Y$ in the unedited LLM is already low --- particularly common in the case of diverse-formatted knowledge, such as code snippets and mathematical derivations --- then $\bm{\delta}$ must induce a large shift to make $Y$ the dominant output. Due to the limited capacity of single-token editing, current methods often struggle with such cases.

This limitation arises because structured knowledge (\eg factual triples) is simpler than diverse-formatted knowledge, which involves intricate dependencies. In triplets, modifying a single token (\eg changing ``Tokyo'' to ``Paris'') is often enough. In contrast, diverse-formatted knowledge like code and math requires consistent updates across multiple tokens due to syntax, variable dependencies, and hierarchical structures. Consider Figure \ref{fig:intro} (f) again. Modifying a function in code may require changes across multiple lines, while altering a symbol in a formula often affects the entire expression. Single-token perturbations fail to capture and propagate such dependencies, leading to ineffective edits.

To empirically validate this, we apply leading single-token editing methods, MEMIT to update triplet-structured and diverse-formatted knowledge in LLaMA3-8B-Instruct. Figure \ref{fig:method1} shows relationships between knowledge format, original probability, and editing efficacy. The results show that diverse-formatted knowledge, which typically has a low original probability, exhibits poor editing efficacy. In a nutshell, \textbf{low original probability} may be the fundamental limitation in updating \textbf{diverse-formatted} knowledge.

\subsection{Editing Long-form Knowledge} \label{sec:method:lim:lon}

Recent studies suggest that although LLMs leverage attention mechanism \cite{gpt-j}, the dependency between distant tokens weakens as their positional distance increases \cite{long_form}. As a result, for long outputs $Y$ (\eg $Y$ exceeding 100 tokens), perturbations applied to input tokens may have a diminishing influence on the later tokens within $Y$. In such cases, the shift of generation probability of $Y$ introduced by the perturbation $\bm{\delta}$ may be too small, making it insufficient to increase the probability of $Y$ above that of any other potential output.

To validate this, we repeat the experiments described in Section \ref{sec:method:lim:div}. Additionally, we use causal tracing \cite{ROME}, a common strategy in model editing, to quantify the shift in the generation probability of $Y$ introduced by the perturbation on $X$ (details of causal tracing can be found in Appendix \ref{app:model_edit}).
Figure \ref{fig:method2} illustrates the relationship between: the number of tokens in $Y$, the probability shift, and editing efficacy.
The results demonstrate that long-form knowledge, which is less affected by single-token edits on the input, shows poor editing efficacy. In other words, the \textbf{low probability shift} introduced by single-token edits emerges as the inherent limitation in updating \textbf{long-form} knowledge.

This limitation, in conjunction with the constraint discussed in Section \ref{sec:method:lim:div}, collectively suggests that the current single-token editing paradigm faces a theoretical efficacy barrier. We formalize this barrier as follows:

\textit{\textbf{Proposition 1} (Efficacy Barrier of Single-token Editing).
Given $N$ pieces of knowledge to be updated, denoted as $(X_i, Y_i)$ for $i \in (1, N)$, and a target LLM $f$, the upper bound on the efficacy of knowledge updates using single-token editing is given by:}
\begin{equation}
\begin{aligned}
\eta= & \frac{1}{N}\sum_{i=1}^N \operatorname{sign}\Big( 
\mathbb{P}_{f(\vh+\bm{\delta})}\left[Y_i\mid X_i\right]-
\\
&
\max _{Y^{\prime}\in \mathbb{Y}/Y_i} \left(\mathbb{P}_{f(\vh+\bm{\delta})} \left[Y' \mid X_i\right]\right)\Big),
\end{aligned}
\end{equation}
\textit{where $\text{sign}(\cdot)$ is the sign function, $\vh$ denotes the hidden state of the perturbed token, $\mathbb{Y}$ represents the set of $f$'s outputs, and $\bm{\delta}$ is defined as:}
\begin{equation}
    \bm{\delta}= \argmin_{\bm{\hat{\delta}}} \left( -\log \mathbb{P}_{f(\vh+\bm{\hat{\delta}})} \left[Y_i \mid X_i\right] \right).
\end{equation}
\textit{Here $\eta$ computes the average success rate of making $Y_i$ the most probable output for each input $X_i$ after editing.}
\section{AnyEdit: Autoregressive Model Editing} \label{sec:anyedit}
The previous section demonstrated that, regardless of how current single-token editing methods are optimized, their editing efficiency is inevitably constrained by an upper bound. Furthermore, as the format and length of the knowledge to be updated become more diverse and longer, this upper bound diminishes, eventually rendering the editing process ineffective.
To address this issue, we introduce AnyEdit, an autoregressive editing paradigm that enables collaborative token updates. The theoretical foundation from an information-theoretic perspective is provided in Section \ref{sec:any_the}, while its instantiation is outlined in Section \ref{sec:any_imp}.

\subsection{Theoretical Foundation} \label{sec:any_the}
We begin by reviewing the editing process from an information-theoretic perspective.
Specifically, existing methods aim to maximize the probability of generating $Y$ by editing hidden states, as formulated in \Eqref{opt}. This objective aligns with maximizing the mutual information (MI) \cite{information} between $X$ and $Y$, given the perturbed hidden state $\vh'$:
\begin{equation}
    \vh' = \argmax_{\hat{\vh'}} I(X; Y \mid \hat{\vh'}), \label{eq:MI_single}
\end{equation}
where $I(\cdot|\cdot)$ denotes MI. A detailed derivation is provided in Appendix \ref{app:proof_cmi}. Extending this framework to perturb the hidden states of $K$ tokens simultaneously, denoted as $\{\vh_1, \vh_2, \cdots, \vh_K\}$, the MI in \Eqref{eq:MI_single} generalizes to:
\begin{equation}
    I(X; Y \mid \vh'_1, \vh'_2, \cdots, \vh'_K) \label{eq:MI_more}.
\end{equation}
However, as discussed in Section \ref{sec:method:lim:lon}, simultaneous perturbation of multiple hidden states introduces interference, reducing editing efficacy. Ideally, the MI term should involve only a single hidden state as a condition, as in \Eqref{eq:MI_single}. 
To achieve this, we first decompose $Y$ into $K$ chunks, denoted as $Y = (Y_1, Y_2, ..., Y_K)$, and apply the chain rule of MI to rewrite \Eqref{eq:MI_more}:
\begin{equation}
    \begin{aligned}
        &I(X; Y \mid \vh'_1, \cdots, \vh'_K)=\\ &\sum_{k=1}^K I(X; Y_k \mid Y_1, \cdots, Y_{k-1}, \vh'_1, \cdots, \vh'_K). \label{eq:decom}
    \end{aligned}
\end{equation}
We then select the last token of each chunk as the perturbation target, establishing a one-to-one correspondence between chunk $Y_k$ and hidden state $\vh'_k$. Given that LLMs generate outputs autoregressively, two key properties hold:
\begin{itemize}[leftmargin=*]
    \item Hidden states of later tokens do not influence the generation of earlier outputs, \ie $H(Y_p \mid \vh'_q) = H(Y_p)$ for $p < q$, where $H(\cdot)$ represents the information entropy;
    \item Once $Y_k$ is determined, conditioning on $Y_k$ subsumes conditioning on the hidden state of tokens within $Y_k$, \ie $H(\cdot \mid Y_k) = H(\cdot \mid Y_k, \vh'_k)$.
\end{itemize}
Using these two properties, we can derive that each term in \Eqref{eq:decom} is proportional to:
\begin{equation}
    \begin{aligned}
        I(X, Y_1, \dots, Y_{k-1}; Y_k \mid \vh'_k). \label{eq:final_MI}
    \end{aligned}
\end{equation}
A detailed derivation is provided in Appendix \ref{app:proof_decom}.
\Eqref{eq:final_MI} directly informs our editing strategy: at each step, we take $X$ and previously edited chunks $(Y_1, \dots, Y_{k-1})$ as inputs, iteratively encoding the next chunk $Y_k$ into the LLM's output in an autoregressive manner.
More importantly, \Eqref{eq:final_MI} exhibits two critical advantages:
\begin{itemize}[leftmargin=*]
    \item[1.] \textbf{Elimination of Interference:} Each MI term conditions on a single hidden state $\vh'$, avoiding the interference issue in \Eqref{eq:MI_more}.
    \item[2. ] \textbf{Scalability Across Length and Formats:} Regardless of $Y$'s length or format, each editing step updates only a single chunk, circumventing the efficacy barrier of single-token editing in Proposition 1.
\end{itemize}

In summary, this formulation provides a theoretical foundation for editing long-form diverse-formatted knowledge in LLMs. By leveraging an autoregressive editing paradigm, we move beyond single-token editing, enabling more scalable and efficient model editing. Next, we demonstrate how to instantiate it by providing specific implementation steps.

\subsection{Implementation Details} \label{sec:any_imp}
Building on the above theoretical foundation, we elaborate on the implementation process of AnyEdit in this part. Specifically, AnyEdit follows a four-step process for effective and scalable knowledge editing:

\textbf{Step 1: Chunk Outputs.} 
The first step involves splitting the target output $Y$ into multiple chunks. We propose two chunking strategies: (1) a sliding window with a fixed number of tokens and (2) semantic segmentation based on natural sentence boundaries.
These strategies endow AnyEdit with the ability to automatically adjust the number of edited tokens based on knowledge length, ensuring efficient edits without redundancy.

\textbf{Step 2: Locate Token and Layer.}
We select the last token of each chunk $Y_k$ as the target for editing, and directly apply the causal tracing to locate the influential layers, following the conventional model editing methods \cite{ROME}.

\textbf{Step 3: Edit Hidden States.}
Following \Eqref{eq:final_MI}, we input $X$ and the previous chunks $\{Y_1,Y_2,\cdots,Y_{k-1}\}$ into the LLM. Then, the hidden state $\vh_k$ of the selected token is edited by the gradient descent to maximize the probability of generating $Y_k$. Formally,
\begin{equation}
\begin{aligned}
    \vh'_k=& \vh_k+ \argmin_{\bm{\hat{\delta}}} \\
    & \left( -\log \mathbb{P}_{f(\vh_k+\bm{\hat{\delta}})} \left[Y_k \mid X, Y_1, \cdots, Y_{k-1}\right] \right).
\end{aligned}
\end{equation}

\textbf{Step 4: Update Model Parameters.}
Finally, the LLM parameters are updated to align the hidden state of the selected tokens with the edited states, using standard least-squares optimization as employed in current model editing methods \cite{ROME}. Detailed implementation of this step is provided in Appendix \ref{app:model_edit}.

This multi-token collaborative editing process enables AnyEdit to overcome the efficacy barrier of single-token editing. Furthermore, AnyEdit enables seamless integration with existing methods, equipping them with the ability to edit any knowledge within LLMs and broadening the scope and practicality of LLM knowledge editing.
\section{Experiments} \label{sec:exp}
In this section, we conduct extensive experiments to address the following research questions:
\begin{table*}[t]
\caption{Long-form knowledge editing performance with different methods. ``Pre-edited'' refers to the unedited pre-trained LLM, and  ``Ori.'' and ``Para.'' denote the outputs of the tested LLM for original questions and paraphrased questions respectively. The best results are highlighted in bold.}
\Large
\renewcommand{\arraystretch}{1.2} 
\resizebox{\textwidth}{!}{%
\begin{tabular}{c|c|cccc|cccc|cc}
\toprule[1.5pt]
& & \multicolumn{4}{c|}{\textbf{UnKEBench}}& \multicolumn{4}{c|}{\textbf{AKEW (Counterfact)}}& \multicolumn{2}{c}{\textbf{AKEW (MQUAKE)}} \\ 
\cmidrule(lr){3-6} \cmidrule(lr){7-10} \cmidrule(lr){11-12}
& & \multicolumn{2}{c|}{\textbf{Ori.}} & \multicolumn{2}{c|}{\textbf{Para.}} & \multicolumn{2}{c|}{\textbf{Ori.}} & \multicolumn{2}{c|}{\textbf{Para.}} & \multicolumn{2}{c}{\textbf{Ori.}} \\ 
\cmidrule(lr){3-6} \cmidrule(lr){7-10} \cmidrule(lr){11-12}
{\multirow{-3.5}{*}{LLM}} & {\multirow{-3.5}{*}{Method}}&\textbf{Bert Score} & \multicolumn{1}{c|}{\textbf{Rouge-L}} &\textbf{Bert Score} & \multicolumn{1}{c|}{\textbf{Rouge-L}} &\textbf{Bert Score} & \multicolumn{1}{c|}{\textbf{Rouge-L}} &\textbf{Bert Score} & \multicolumn{1}{c|}{\textbf{Rouge-L}} &\textbf{Bert Score} & \multicolumn{1}{c}{\textbf{Rouge-L}} \\ 
\midrule[1.0pt]
\multirow{8}{*}{\rotatebox{90}{{Llama3-8B-It}}} &Pre-edited&{63.18\std{0.38}}&{23.67\std{0.52}}&{62.73\std{0.31}}&{23.52\std{0.51}} & {64.03\std{0.32}} & {15.74\std{0.42}} & {40.20\std{0.46}} & {5.52\std{0.10}}  & {65.77\std{0.37}} & {16.25\std{0.47}}  \\
&FT-L&{40.31\std{0.45}}&{11.39\std{0.56}}&{37.29\std{0.41}}&{8.51\std{0.51}} & {42.89\std{0.43}} & {13.12\std{0.58}} & {31.44\std{0.49}} & {5.24\std{0.08}}  & {45.87\std{0.43}} & {12.99\std{0.52}}  \\
&MEND&{68.73\std{0.34}}&{29.24\std{0.52}}&{64.11\std{0.32}}&{28.05\std{0.54}} & {68.81\std{0.31}} & {30.30\std{0.48}} & {41.56\std{0.40}} & {10.95\std{0.57}}  & {67.85\std{0.33}} & {22.48\std{0.56}}  \\
&ROME&{72.16\std{0.31}}&{23.74\std{0.46}}&{70.54\std{0.25}}&{22.39\std{0.54}} & {72.90\std{0.36}} & {25.86\std{0.52}} & {43.59\std{0.51}} & {12.37\std{0.55}}  & {70.10\std{0.43}} & {21.07\std{0.59}}  \\
&MEMIT&{76.21\std{0.36}}&{30.49\std{0.52}}&{74.25\std{0.31}}&{28.65\std{0.61}} & {76.44\std{0.33}} & {32.20\std{0.48}} & {47.80\std{0.34}} & {16.09\std{0.59}}  & {75.31\std{0.37}} & {22.73\std{0.61}}  \\
&AlphaEdit&{73.92\std{0.29}}&{26.59\std{0.49}}&{72.96\std{0.26}}&{25.92\std{0.51}} & {72.63\std{0.31}} & {24.95\std{0.50}} & {44.67\std{0.46}} & {13.79\std{0.49}}  & {69.85\std{0.36}} & {23.04\std{0.59}}  \\
&AnyEdit&\textbf{97.76\std{0.11}}&\textbf{92.96\std{0.24}}&\textbf{96.60\std{0.19}}&\textbf{95.60\std{0.35}} & \textbf{97.76\std{0.14}} & \textbf{95.87\std{0.23}} & \textbf{62.63\std{0.44}} & \textbf{46.51\std{0.59}}  & \textbf{96.33\std{0.21}} & \textbf{94.32\std{0.23}}  \\
\cmidrule(lr){2-12}
&UnKE&{98.34\std{0.15}}&{93.33\std{0.26}}&{93.38\std{0.21}}&{78.42\std{0.32}} & {98.62\std{0.14}} & {96.37\std{0.22}} & {59.62\std{0.44}} & {32.89\std{0.59}}  & {98.33\std{0.13}} & {95.42\std{0.20}}  \\
&AnyEdit*&\textbf{99.86\std{0.08}}&\textbf{99.68\std{0.21}}&\textbf{94.70\std{0.12}}&\textbf{85.75\std{0.23}} & \textbf{99.95\std{0.01}} & \textbf{99.98\std{0.01}} & \textbf{64.24\std{0.48}} & \textbf{45.31\std{0.55}}  & \textbf{99.89\std{0.06}} & \textbf{99.69\std{0.09}}  \\
\midrule[1pt]
\midrule[1pt]
\multirow{8}{*}{\rotatebox{90}{{Qwen2.5-7B-It}}} &Pre-edited&{64.18\std{0.37}}&{25.88\std{0.59}}&{64.39\std{0.34}}&{24.02\std{0.55}} & {65.50\std{0.34}} & {18.24\std{0.60}} & {44.74\std{0.41}} & {17.29\std{0.51}}  & {67.71\std{0.37}} & {19.58\std{0.49}}  \\
&FT-L&{44.02\std{0.43}}&{13.78\std{0.56}}&{40.33\std{0.36}}&{12.93\std{0.49}} & {46.66\std{0.48}} & {14.63\std{0.58}} & {32.34\std{0.50}} & {12.31\std{0.62}}  & {47.47\std{0.42}} & {15.75\std{0.55}}  \\
&MEND&{69.49\std{0.38}}&{27.77\std{0.61}}&{62.01\std{0.44}}&{27.92\std{0.57}} & {69.54\std{0.54}} & {25.47\std{0.49}} & {52.86\std{0.40}} & {22.81\std{0.54}}  & {69.40\std{0.32}} & {32.39\std{0.44}}  \\
&ROME&{74.73\std{0.33}}&{31.52\std{0.42}}&{71.90\std{0.21}}&{28.12\std{0.38}} & {75.89\std{0.38}} & {36.42\std{0.45}} & {55.67\std{0.47}} & {25.79\std{0.59}}  & {72.18\std{0.373}} & {35.61\std{0.49}}  \\
&MEMIT&{78.03\std{0.30}}&{38.04\std{0.47}}&{76.50\std{0.31}}&{28.65\std{0.50}} & {77.19\std{0.32}} & {38.95\std{0.48}} & {56.04\std{0.40}} & {25.73\std{0.57}}  & {73.15\std{0.32}} & {34.39\std{0.54}}  \\
&AlphaEdit&{80.48\std{0.29}}&{42.77\std{0.36}}&{78.38\std{0.21}}&{38.26\std{0.38}} & {80.66\std{0.25}} & {45.55\std{0.37}} & {56.99\std{0.49}} & {27.69\std{0.59}}  & {74.35\std{0.31}} & {41.07\std{0.44}}  \\
&AnyEdit&\textbf{98.05\std{0.16}}&\textbf{94.89\std{0.29}}&\textbf{93.56\std{0.15}}&\textbf{79.98\std{0.28}} & \textbf{98.08\std{0.15}} & \textbf{95.09\std{0.19}} & \textbf{65.40\std{0.38}} & \textbf{43.49\std{0.47}}  & \textbf{98.14\std{0.13}} & \textbf{96.39\std{0.18}}  \\
\cmidrule(lr){2-12}
&UnKE&{96.97\std{0.18}}&{91.01\std{0.24}}&{89.17\std{0.15}}&{67.00\std{0.29}} & {97.34\std{0.13}} & {90.44\std{0.16}} & {59.29\std{0.48}} & {29.27\std{0.61}}  & {95.04\std{0.23}} & {87.60\std{0.25}}  \\
&AnyEdit*&\textbf{99.35\std{0.12}}&\textbf{98.82\std{0.24}}&\textbf{94.81\std{0.13}}&\textbf{82.60\std{0.26}} & \textbf{99.63\std{0.09}} & \textbf{98.99\std{0.10}} & \textbf{60.78\std{0.39}} & \textbf{32.95\std{0.59}}  & \textbf{99.09\std{0.07}} & \textbf{97.98\std{0.10}}  \\
\bottomrule[1.5pt]
\end{tabular}
}
\label{tab:performance}
\end{table*}
\begin{itemize}[leftmargin=*]
    \item \textbf{RQ1:} How does AnyEdit perform compared to other baselines on tasks involving long-form knowledge?
    \item \textbf{RQ2:} How does AnyEdit compare to other baselines in handling diverse-formatted knowledge?
    \item \textbf{RQ3:} Can AnyEdit improve the performance of other locate-then-edit methods?
    \item \textbf{RQ4:} How does the token length of each chunk affect the performance of long-form knowledge editing in AnyEdit?
\end{itemize}

\subsection{Experimental Setup}  

In this subsection, we summarize the LLMs, baseline methods, datasets, and evaluation metrics used in our experiments. Further details are provided in Appendix \ref{app:exp}.

\textbf{LLMs \& Baseline Methods.}  
We conducted experiments using two widely adopted LLMs: Llama3-8B-Instruct and Qwen2.5-7B-Instruct. For comparison with our method, we evaluated against several model editing methods, including FT-L \cite{FTw}, MEND \cite{MEND}, ROME \cite{ROME}, MEMIT \cite{MEMIT}, AlphaEdit \cite{AlphaEdit}, and UnKE \cite{UnKE}.

\textbf{Datasets and Evaluation Metrics.}  
To evaluate the performance of unstructured long-form knowledge editing, we employed existing benchmarks, including UnKEBench \cite{UnKE} and AKEW \cite{AKEW}. To further assess the editing performance across diverse-formatted knowledge, we constructed a dataset named \textbf{EditEverything}. For evaluation metrics, we assessed the similarity between the model's edited outputs and the editing targets from three perspectives: original questions, paraphrased questions, and sub-questions. The evaluation was conducted using both semantic similarity (BERT Score \cite{bertscore}) and lexical similarity (ROUGE Scores \cite{rouge}) to determine the editing success rate.

\begin{figure*}[t]
    \centering
    \includegraphics[width=1.01\linewidth]{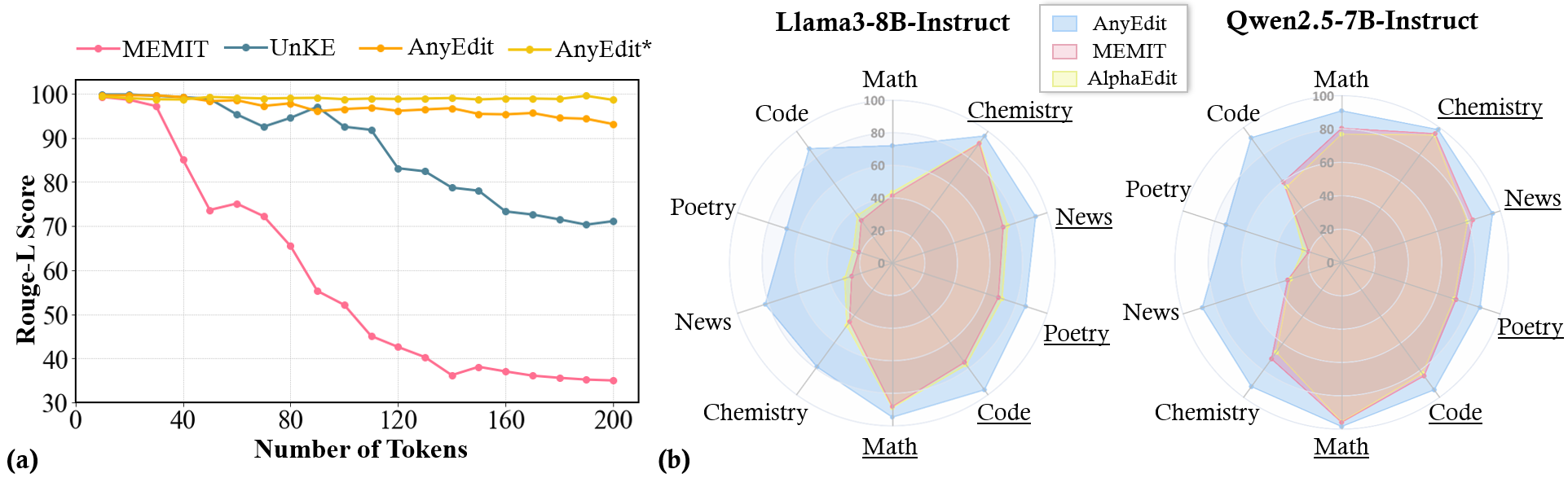}
    \caption{Performance comparison between the AnyEdit approach and baseline methods on long-form diverse-formatted knowledge. (a) The performance of various methods on the EditEverything dataset in relation to the number of target tokens edited. (b) A comparison of different editing methods across various types of knowledge. Knowledge types without underlining represent Rouge-L Score metrics, while those with underlining indicate Bert Score metrics. Best viewed in color.}
    \label{fig:exp_1}
\end{figure*}

\subsection{Long-Form Knowledge Editing (RQ1)}  
To evaluate long-form knowledge editing, we conducted experiments on two base LLMs, comparing AnyEdit with baselines on UnKEBench, AKEW (Counterfact), and AKEW (MQUAKE). Following UnKE’s settings, we set the batch size to 1 and used a decoding temperature of 0.001.

Most locate-then-edit baseline methods perform edits by computing parameter updates for a single MLP layer using closed-form solutions, while UnKE uses gradient descent to update the parameters of an entire layer, trading efficiency for precision. To ensure fairness, we introduce \textbf{AnyEdit*}, which also updates the full layer via gradient descent, allowing direct comparison with UnKE. Table~\ref{tab:performance} presents the main results, with additional details in Appendix~\ref{app:exp_result}. Based on Table \ref{tab:performance}, we make the following observations:

\begin{itemize}[leftmargin=*]
    \item \textbf{Obs 1: AnyEdit outperforms all baselines across datasets, LLMs, and metrics.}  
    On UnKEBench, it improves BERT Score by over 20\%, demonstrating its effectiveness in editing long-form knowledge. AnyEdit* further enhances performance, refining complex knowledge representations.

     \item \textbf{Obs 2: AnyEdit shows strong generalization on paraphrase questions.}  
    It significantly outperforms baselines in paraphrase scenarios. On UnKEBench, AnyEdit improves BERT Score by 32\% and ROUGE-L by 56\% on Llama3-8B-Instruct. This highlights its ability to edit long-form knowledge while maintaining robustness to rephrased queries.
\end{itemize}

\begin{figure*}[t]
    \centering
    \includegraphics[width=1.01\linewidth]{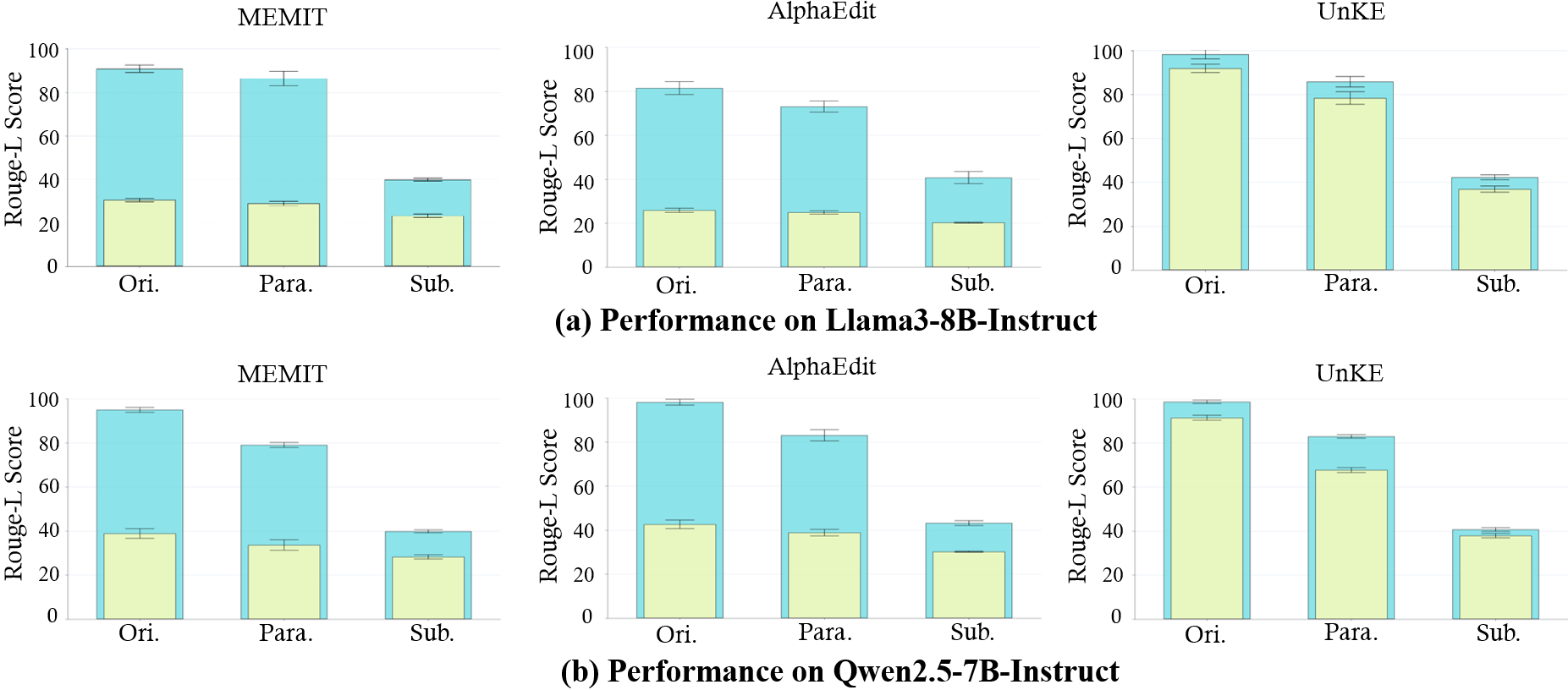}
    \caption{Performance improvements of baseline editing methods (\ie MEMIT, AlphaEdit and UnKE) after incorporating  autoregressive editing paradigm in AnyEdit. The yellow bars represent the original performance of each baseline, while the blue bars represent the performance after the addition. Best viewed in color.}
    \label{fig:exp_2}
\end{figure*}
\subsection{Diverse-Formatted Knowledge Editing (RQ2)}  

To further evaluate the generalization capabilities of \textbf{AnyEdit}, we constructed a dataset named \textbf{EditEverything}, which includes unstructured knowledge from various domains and formats to assess the performance of existing editing methods on complex knowledge. This dataset encompasses knowledge from diverse domains such as mathematics, poetry, news, computer code, and chemistry. The detailed evaluation results are shown in Figure \ref{fig:exp_1}(b). Furthermore, we evaluate the relationship between the editing performance and the number of target tokens by selecting long-form samples from the EditEverything dataset. The final results are illustrated in Figure \ref{fig:exp_1}(a). Based on these results, we draw the following observations:
\begin{itemize}[leftmargin=*]
    \item \textbf{Obs 3: AnyEdit achieves superior performance across diverse knowledge domains.}  
    AnyEdit demonstrates consistent improvements across various knowledge types in editing tasks. Notably, the performance gains are most significant in the \textit{Code} and \textit{News} categories, achieving increases of 60.58\% and 52.38\%, respectively, in the ROUGE-L metric. These results underscore the strong generalization capability of the AnyEdit method.

    \item \textbf{Obs 4: AnyEdit maintains stable performance as the number of target tokens increases.}  
    Specifically, methods such as MEMIT, AlphaEdit, and UnKE exhibit varying degrees of performance degradation when the number of target tokens exceeds a certain threshold. For instance, MEMIT and AlphaEdit experience significant performance drops when editing targets exceed 30 tokens. In contrast, AnyEdit remains robust under these conditions, further demonstrating its reliability and scalability.
\end{itemize}

\subsection{Boosting Current Editing Methods  (RQ3)} 

We further evaluate the effectiveness of AnyEdit when applied to several popular model editing methods. Specifically, we replace the locate stage of several locate-then-edit baseline methods with the  autoregressive editing paradigm, enabling the simultaneous identification and editing of multiple tokens' hidden states. The modified methods are evaluated directly on the UnKEBench dataset, and their performance is compared against their original versions. The detailed results are presented in Figure \ref{fig:exp_2}. Furthermore, to evaluate the time cost introduced by incorporating our strategy, we measured the average editing time per sample for each method combined with our approach. The experimental results are presented in Table \ref{tab:exp_2}. Based on these results, we draw the following observation:

\begin{itemize}[leftmargin=*]
    \item \textbf{Obs 5: AnyEdit significantly improves the editing performance of existing methods.}  
    After integrating AnyEdit, current methods achieve notable improvements across all metrics. These results highlight that AnyEdit serves as an effective plug-and-play approach, enhancing the capabilities of existing model editing methods.
    \item \textbf{Obs 6: Incorporating AnyEdit introduces a slight increase in editing time.}  
    Integrating AnyEdit with other model editing methods results in a modest increase in editing time, exhibiting an average relative increase of 24.7\% in editing time. However, given the substantial performance improvements achieved by AnyEdit, this trade-off is considered acceptable.
\end{itemize}

\subsection{Impact of Chunk Size (RQ4)} 
We conclude by evaluating whether the choice of chunk size impacts the long-form knowledge editing performance of AnyEdit. Specifically, we adopt a sliding window to divide the target text into chunks and vary the chunk size to assess its influence on editing performance. Experimental results show that as the chunk size increases beyond a certain threshold, the editing performance of AnyEdit declines. Due to space limitations, detailed results and analysis can be found in the Appendix \ref{app:exp_result_4}.

\begin{table}[htbp]
\centering
\caption{Comparison of average editing time per sample (seconds) with baseline methods and their enhanced versions integrated with AnyEdit. The `+' symbol denotes integration with our method. Evaluation performed on Counterfact and MQUAKE from the AKEW benchmark suite, along with UnKEBench.}
\label{tab:exp_2}
\begin{tabular}{@{}l S[table-format=2.2] S[table-format=2.2] S[table-format=2.2]@{}}
\toprule
\textbf{Method} & \textbf{UnKEBench} & \textbf{Counterfact} & \textbf{MQUAKE} \\
\midrule
MEMIT          & 16.37 & 17.05 & 16.92 \\
MEMIT+         & 21.14 & 20.43 & 21.28 \\
AlphaEdit      & 17.25 & 18.16 & 17.89 \\
AlphaEdit+     & 22.03 & 21.57 & 22.95 \\
UnKE           & 22.91 & 24.12 & 23.84 \\
UnKE+          & 28.32 & 30.25 & 29.77 \\
\bottomrule
\end{tabular}
\end{table}
\section{Related Work}

\textbf{Model Editing for Knowledge Update.}  
Traditional model editing methods modify a model's knowledge by either altering a small subset of parameters or introducing external memory. These approaches enable knowledge updates, mitigate model hallucination \cite{hallu_edit}, or facilitate information injection \cite{harm_edit}. \textit{Parameter-Modifying Methods} update existing model parameters to encode new knowledge. Meta-learning-based approaches, such as MEND \cite{MEND}, InstructEdit \cite{InstructEdit} and RLedit \cite{RLedit}, train hypernetworks to predict parameter updates, with MEND improving efficiency via low-rank gradient decomposition. Locate-then-edit methods, including ROME \cite{ROME} and MEMIT \cite{MEMIT}, use causal tracing to identify knowledge-relevant parameters and update them via least-squares optimization. NSE \cite{NSE} mitigates catastrophic forgetting in model editing by updating only a subset of neuron parameters. AlphaEdit \cite{AlphaEdit} extends this to lifelong editing with a null-space projection strategy. In contrast, \textit{Parameter-Preserving Methods} introduce additional parameters or memory instead of modifying existing ones. ICE \cite{ICE} and DeCK \cite{DeCK} achieve parameter-free model editing through in-context learning, enabling knowledge updates without modifying the model's parameters. SERAC \cite{SERAC} retrieves updates from external memory, while T-Patcher \cite{T-patcher} and CaliNet \cite{calinet} allocate new neurons to encode knowledge. GRACE \cite{grace} replaces hidden states with discrete codebook values, and WISE \cite{wise} integrates parameterized memory for efficient knowledge merging.

\textbf{Unstructured Knowledge Editing.}  
Recent research focuses on editing unstructured knowledge with free text beyond structured triples. \citet{AKEW} highlight the limitations of prior methods in evaluating unstructured text editing and introduce AKEW as a benchmark. UnKE \cite{UnKE} refines locate-then-edit methods by updating all parameters within a single layer, improving their capability to handle unstructured knowledge. They also introduce UnKEBench, a dedicated benchmark for unstructured knowledge editing. \citet{freetext} propose a dynamic perception module to efficiently locate commonsense knowledge parameters, enabling free-text updates. However, while these methods handle unstructured and lengthy knowledge, they remain limited to factual knowledge. In contrast, our work extends editing capabilities to \textit{diverse-formatted knowledge}, which encompasses various textual structures beyond factual statements.

\section{Conclusion \& Limitations}

In this work, we address the limitations of existing model editing methods in handling long-form and diverse knowledge formats. Current approaches often face challenges due to their reliance on editing a single token’s hidden state, which restricts their effectiveness on complex and unstructured knowledge. To overcome this, we introduced \textbf{AnyEdit}, a novel autoregressive editing paradigm that enables efficient and precise updates by sequentially processing and editing token hidden states over long-form text. Our approach is grounded in theoretical validation using the Chain Rule of mutual information, demonstrating its ability to produce consistent and accurate edits. Additionally, AnyEdit serves as a versatile framework that integrates seamlessly with traditional methods, broadening their applicability to a wider range of knowledge editing tasks. 

Despite its ability to enhance traditional editing approaches and broaden their applicability to complex knowledge editing tasks, AnyEdit still faces two key limitations: Firstly, the current framework is not explicitly optimized for \textit{lifelong editing} scenarios, which demand continuous and iterative knowledge updates over time. Adapting AnyEdit to such dynamic settings—where model parameters or hidden states require periodic refinement—remains a critical challenge for future research. Secondly, AnyEdit is currently confined to \textit{textual knowledge editing} and lacks support for multimodal knowledge integration. Extending its capabilities to handle cross-modal updates (e.g., synchronizing edits across text, images, and audio) would significantly expand its practical utility, offering a promising direction for multimodal language model editing.

\section*{Impact Statement}

AnyEdit enhances model editing by enabling precise and efficient updates across diverse knowledge formats, addressing limitations in editing long-form and unstructured knowledge. Its ability to modify knowledge at scale improves the adaptability of LLMs, making them more responsive to real-world updates. However, increased editing flexibility raises concerns about potential misuse, such as unauthorized knowledge injection or model tampering. Mitigating these risks requires careful deployment and oversight to ensure responsible use. We encourage the research community to leverage AnyEdit for advancing trustworthy and beneficial AI applications.

\section*{Ackonwledgments}
This research is supported by the National Science and Technology Major Project (2023ZD0121102) and the National Natural Science Foundation of China (U24B20180, 62121002). This research is also supported by the advanced computing resources provided by the Supercomputing Center of the USTC.

\nocite{}
\bibliography{references}

\begin{thebibliography}{42}
\providecommand{\natexlab}[1]{#1}
\providecommand{\url}[1]{\texttt{#1}}
\expandafter\ifx\csname urlstyle\endcsname\relax
  \providecommand{\doi}[1]{doi: #1}\else
  \providecommand{\doi}{doi: \begingroup \urlstyle{rm}\Url}\fi

\bibitem[Austin et~al.(2021)Austin, Odena, Nye, Bosma, Michalewski, Dohan, Jiang, Cai, Terry, Le, and Sutton]{code}
Austin, J., Odena, A., Nye, M.~I., Bosma, M., Michalewski, H., Dohan, D., Jiang, E., Cai, C.~J., Terry, M., Le, Q.~V., and Sutton, C.
\newblock Program synthesis with large language models.
\newblock \emph{CoRR}, abs/2108.07732, 2021.

\bibitem[Bi et~al.(2024)Bi, Liu, Mei, Wang, Ji, and Cheng]{DeCK}
Bi, B., Liu, S., Mei, L., Wang, Y., Ji, P., and Cheng, X.
\newblock Decoding by contrasting knowledge: Enhancing llms' confidence on edited facts.
\newblock \emph{CoRR}, abs/2405.11613, 2024.

\bibitem[Brown et~al.(2020)Brown, Mann, Ryder, Subbiah, Kaplan, Dhariwal, Neelakantan, Shyam, Sastry, Askell, Agarwal, Herbert{-}Voss, Krueger, Henighan, Child, Ramesh, Ziegler, Wu, Winter, Hesse, Chen, Sigler, Litwin, Gray, Chess, Clark, Berner, McCandlish, Radford, Sutskever, and Amodei]{GPT3}
Brown, T.~B., Mann, B., Ryder, N., Subbiah, M., Kaplan, J., Dhariwal, P., Neelakantan, A., Shyam, P., Sastry, G., Askell, A., Agarwal, S., Herbert{-}Voss, A., Krueger, G., Henighan, T., Child, R., Ramesh, A., Ziegler, D.~M., Wu, J., Winter, C., Hesse, C., Chen, M., Sigler, E., Litwin, M., Gray, S., Chess, B., Clark, J., Berner, C., McCandlish, S., Radford, A., Sutskever, I., and Amodei, D.
\newblock Language models are few-shot learners.
\newblock In \emph{NeurIPS}, 2020.

\bibitem[Cao et~al.(2021)Cao, Aziz, and Titov]{KE}
Cao, N.~D., Aziz, W., and Titov, I.
\newblock Editing factual knowledge in language models.
\newblock In \emph{{EMNLP} {(1)}}, pp.\  6491--6506. Association for Computational Linguistics, 2021.

\bibitem[Chen et~al.(2024)Chen, Huang, Li, Chen, Lai, Xu, Gu, Gu, Yao, Xiao, Yan, Wang, Torr, Song, and Shu]{harm_edit}
Chen, C., Huang, B., Li, Z., Chen, Z., Lai, S., Xu, X., Gu, J., Gu, J., Yao, H., Xiao, C., Yan, X., Wang, W.~Y., Torr, P., Song, D., and Shu, K.
\newblock Can editing llms inject harm?
\newblock \emph{CoRR}, abs/2407.20224, 2024.

\bibitem[Deng et~al.(2025)Deng, Wei, Pang, Ding, Shen, and Cheng]{UnKE}
Deng, J., Wei, Z., Pang, L., Ding, H., Shen, H., and Cheng, X.
\newblock Everything is editable: Extend knowledge editing to unstructured data in large language models.
\newblock In \emph{{ICLR}}. OpenReview.net, 2025.

\bibitem[Dobrushin(1963)]{mutual_information}
Dobrushin, R.~L.
\newblock General formulation of shannon's main theorem in information theory.
\newblock \emph{American mathematical society translations}, 33:\penalty0 323--438, 1963.

\bibitem[Dong et~al.(2022)Dong, Dai, Song, Xu, Sui, and Li]{calinet}
Dong, Q., Dai, D., Song, Y., Xu, J., Sui, Z., and Li, L.
\newblock Calibrating factual knowledge in pretrained language models.
\newblock In \emph{{EMNLP} (Findings)}, pp.\  5937--5947. Association for Computational Linguistics, 2022.

\bibitem[Fang et~al.(2025)Fang, Jiang, Wang, Ma, Shi, Wang, He, and Chua]{AlphaEdit}
Fang, J., Jiang, H., Wang, K., Ma, Y., Shi, J., Wang, X., He, X., and Chua, T.
\newblock Alphaedit: Null-space constrained knowledge editing for language models.
\newblock In \emph{{ICLR}}. OpenReview.net, 2025.

\bibitem[Geva et~al.(2021)Geva, Schuster, Berant, and Levy]{key_value}
Geva, M., Schuster, R., Berant, J., and Levy, O.
\newblock Transformer feed-forward layers are key-value memories.
\newblock In \emph{{EMNLP} {(1)}}, pp.\  5484--5495. Association for Computational Linguistics, 2021.

\bibitem[Gu et~al.(2024)Gu, Xu, Ma, Lu, Ling, Chang, and Peng]{RECT}
Gu, J., Xu, H., Ma, J., Lu, P., Ling, Z., Chang, K., and Peng, N.
\newblock Model editing harms general abilities of large language models: Regularization to the rescue.
\newblock In \emph{{EMNLP}}, pp.\  16801--16819. Association for Computational Linguistics, 2024.

\bibitem[Hartvigsen et~al.(2023)Hartvigsen, Sankaranarayanan, Palangi, Kim, and Ghassemi]{grace}
Hartvigsen, T., Sankaranarayanan, S., Palangi, H., Kim, Y., and Ghassemi, M.
\newblock Aging with {GRACE:} lifelong model editing with discrete key-value adaptors.
\newblock In \emph{NeurIPS}, 2023.

\bibitem[Huang et~al.(2025)Huang, Chen, Xu, Payani, and Shu]{hallu_edit}
Huang, B., Chen, C., Xu, X., Payani, A., and Shu, K.
\newblock Can knowledge editing really correct hallucinations?
\newblock In \emph{{ICLR}}. OpenReview.net, 2025.

\bibitem[Huang et~al.(2024)Huang, Wang, Zhao, and Liu]{freetext}
Huang, X., Wang, Y., Zhao, J., and Liu, K.
\newblock Commonsense knowledge editing based on free-text in llms.
\newblock In \emph{{EMNLP}}, pp.\  14870--14880. Association for Computational Linguistics, 2024.

\bibitem[Huang et~al.(2023)Huang, Shen, Zhang, Zhou, Rong, and Xiong]{T-patcher}
Huang, Z., Shen, Y., Zhang, X., Zhou, J., Rong, W., and Xiong, Z.
\newblock Transformer-patcher: One mistake worth one neuron.
\newblock In \emph{{ICLR}}. OpenReview.net, 2023.

\bibitem[Jiang et~al.(2024)Jiang, Fang, Zhang, Zhang, Wang, Liang, and Wang]{NSE}
Jiang, H., Fang, J., Zhang, T., Zhang, A., Wang, R., Liang, T., and Wang, X.
\newblock Neuron-level sequential editing for large language models.
\newblock \emph{CoRR}, abs/2410.04045, 2024.

\bibitem[Kullback(1997)]{information}
Kullback, S.
\newblock \emph{Information theory and statistics}.
\newblock Courier Corporation, 1997.

\bibitem[Lang(2012)]{normal_equation}
Lang, S.
\newblock \emph{Introduction to linear algebra}.
\newblock Springer Science \& Business Media, 2012.

\bibitem[Li et~al.(2023)Li, Hammoud, Itani, Khizbullin, and Ghanem]{chemistry}
Li, G., Hammoud, H. A. A.~K., Itani, H., Khizbullin, D., and Ghanem, B.
\newblock {CAMEL:} communicative agents for "mind" exploration of large scale language model society.
\newblock \emph{CoRR}, abs/2303.17760, 2023.

\bibitem[Li et~al.(2025)Li, Jiang, Chen, Bi, Zhou, Sun, Fang, and Wang]{RLedit}
Li, Z., Jiang, H., Chen, H., Bi, B., Zhou, Z., Sun, F., Fang, J., and Wang, X.
\newblock Reinforced lifelong editing for language models.
\newblock \emph{CoRR}, abs/2502.05759, 2025.

\bibitem[Lin(2004)]{rouge}
Lin, C.-Y.
\newblock Rouge: A package for automatic evaluation of summaries.
\newblock In \emph{Text summarization branches out}, pp.\  74--81, 2004.

\bibitem[Liu et~al.(2024{\natexlab{a}})Liu, Lin, Hewitt, Paranjape, Bevilacqua, Petroni, and Liang]{long_form}
Liu, N.~F., Lin, K., Hewitt, J., Paranjape, A., Bevilacqua, M., Petroni, F., and Liang, P.
\newblock Lost in the middle: How language models use long contexts.
\newblock \emph{Trans. Assoc. Comput. Linguistics}, 12:\penalty0 157--173, 2024{\natexlab{a}}.

\bibitem[Liu et~al.(2024{\natexlab{b}})Liu, He, Xiong, Fu, Deng, and Hooi]{FlipAttack}
Liu, Y., He, X., Xiong, M., Fu, J., Deng, S., and Hooi, B.
\newblock Flipattack: Jailbreak llms via flipping.
\newblock \emph{arXiv preprint arXiv:2410.02832}, 2024{\natexlab{b}}.

\bibitem[Meng et~al.(2022)Meng, Bau, Andonian, and Belinkov]{ROME}
Meng, K., Bau, D., Andonian, A., and Belinkov, Y.
\newblock Locating and editing factual associations in {GPT}.
\newblock In \emph{NeurIPS}, 2022.

\bibitem[Meng et~al.(2023)Meng, Sharma, Andonian, Belinkov, and Bau]{MEMIT}
Meng, K., Sharma, A.~S., Andonian, A.~J., Belinkov, Y., and Bau, D.
\newblock Mass-editing memory in a transformer.
\newblock In \emph{{ICLR}}. OpenReview.net, 2023.

\bibitem[Mitchell et~al.(2022{\natexlab{a}})Mitchell, Lin, Bosselut, Finn, and Manning]{MEND}
Mitchell, E., Lin, C., Bosselut, A., Finn, C., and Manning, C.~D.
\newblock Fast model editing at scale.
\newblock In \emph{{ICLR}}. OpenReview.net, 2022{\natexlab{a}}.

\bibitem[Mitchell et~al.(2022{\natexlab{b}})Mitchell, Lin, Bosselut, Manning, and Finn]{SERAC}
Mitchell, E., Lin, C., Bosselut, A., Manning, C.~D., and Finn, C.
\newblock Memory-based model editing at scale.
\newblock In \emph{{ICML}}, volume 162 of \emph{Proceedings of Machine Learning Research}, pp.\  15817--15831. {PMLR}, 2022{\natexlab{b}}.

\bibitem[Mitra et~al.(2024)Mitra, Khanpour, Rosset, and Awadallah]{math}
Mitra, A., Khanpour, H., Rosset, C., and Awadallah, A.
\newblock Orca-math: Unlocking the potential of slms in grade school math.
\newblock \emph{CoRR}, abs/2402.14830, 2024.

\bibitem[Papineni et~al.(2002)Papineni, Roukos, Ward, and Zhu]{bleu}
Papineni, K., Roukos, S., Ward, T., and Zhu, W.
\newblock Bleu: a method for automatic evaluation of machine translation.
\newblock In \emph{{ACL}}, pp.\  311--318. {ACL}, 2002.

\bibitem[Radford et~al.(2019)Radford, Wu, Child, Luan, Amodei, Sutskever, et~al.]{gpt2-xl}
Radford, A., Wu, J., Child, R., Luan, D., Amodei, D., Sutskever, I., et~al.
\newblock Language models are unsupervised multitask learners.
\newblock \emph{OpenAI blog}, 1\penalty0 (8):\penalty0 9, 2019.

\bibitem[Wang \& Komatsuzaki(2021)Wang and Komatsuzaki]{gpt-j}
Wang, B. and Komatsuzaki, A.
\newblock Gpt-j-6b: A 6 billion parameter autoregressive language model, 2021.

\bibitem[Wang et~al.(2024)Wang, Li, Zhang, Xu, Yao, Jiang, Xie, Huang, and Chen]{wise}
Wang, P., Li, Z., Zhang, N., Xu, Z., Yao, Y., Jiang, Y., Xie, P., Huang, F., and Chen, H.
\newblock Wise: Rethinking the knowledge memory for lifelong model editing of large language models.
\newblock \emph{arXiv preprint arXiv:2405.14768}, 2024.

\bibitem[Wu et~al.(2024)Wu, Pan, Wang, and Luu]{AKEW}
Wu, X., Pan, L., Wang, W.~Y., and Luu, A.~T.
\newblock {AKEW:} assessing knowledge editing in the wild.
\newblock In \emph{{EMNLP}}, pp.\  15118--15133. Association for Computational Linguistics, 2024.

\bibitem[Xie et~al.(2023)Xie, Zhang, Chen, Lou, and Su]{conflictqa}
Xie, J., Zhang, K., Chen, J., Lou, R., and Su, Y.
\newblock Adaptive chameleon or stubborn sloth: Unraveling the behavior of large language models in knowledge clashes.
\newblock \emph{CoRR}, abs/2305.13300, 2023.

\bibitem[Yang et~al.(2024)Yang, Yang, Zhang, Hui, Zheng, Yu, Li, Liu, Huang, Wei, et~al.]{qwen2.5}
Yang, A., Yang, B., Zhang, B., Hui, B., Zheng, B., Yu, B., Li, C., Liu, D., Huang, F., Wei, H., et~al.
\newblock Qwen2. 5 technical report.
\newblock \emph{arXiv preprint arXiv:2412.15115}, 2024.

\bibitem[Zhang et~al.(2024)Zhang, Tian, Cheng, Liang, Hu, Xue, Gou, Chen, and Chen]{InstructEdit}
Zhang, N., Tian, B., Cheng, S., Liang, X., Hu, Y., Xue, K., Gou, Y., Chen, X., and Chen, H.
\newblock Instructedit: Instruction-based knowledge editing for large language models.
\newblock In \emph{{IJCAI}}, pp.\  6633--6641. ijcai.org, 2024.

\bibitem[Zhang et~al.(2025)Zhang, Chen, Dong, Chen, Huang, and Huang]{SGU-SQL}
Zhang, Q., Chen, H., Dong, J., Chen, S., Huang, F., and Huang, X.
\newblock Structure-guided large language models for text-to-{SQL} generation.
\newblock In \emph{Forty-second International Conference on Machine Learning}, 2025.

\bibitem[Zhang et~al.()Zhang, Fang, Jiang, Bi, Wang, and He]{ECE}
Zhang, T., Fang, J., Jiang, H., Bi, B., Wang, X., and He, X.
\newblock Explainable and efficient editing for large language models.
\newblock In \emph{THE WEB CONFERENCE 2025}.

\bibitem[Zhang et~al.(2019)Zhang, Kishore, Wu, Weinberger, and Artzi]{bertscore}
Zhang, T., Kishore, V., Wu, F., Weinberger, K.~Q., and Artzi, Y.
\newblock Bertscore: Evaluating text generation with bert.
\newblock \emph{arXiv preprint arXiv:1904.09675}, 2019.

\bibitem[Zhao et~al.(2024)Zhao, Zhou, Li, Tang, Wang, Hou, Min, Zhang, Zhang, Dong, Du, Yang, Chen, Chen, Jiang, Ren, Li, Tang, Liu, Liu, Nie, and Wen]{survey-llm}
Zhao, W.~X., Zhou, K., Li, J., Tang, T., Wang, X., Hou, Y., Min, Y., Zhang, B., Zhang, J., Dong, Z., Du, Y., Yang, C., Chen, Y., Chen, Z., Jiang, J., Ren, R., Li, Y., Tang, X., Liu, Z., Liu, P., Nie, J.-Y., and Wen, J.-R.
\newblock A survey of large language models, 2024.
\newblock URL \url{https://arxiv.org/abs/2303.18223}.

\bibitem[Zheng et~al.(2023)Zheng, Li, Dong, Fan, Wu, Xu, and Chang]{ICE}
Zheng, C., Li, L., Dong, Q., Fan, Y., Wu, Z., Xu, J., and Chang, B.
\newblock Can we edit factual knowledge by in-context learning?
\newblock In \emph{{EMNLP}}, pp.\  4862--4876. Association for Computational Linguistics, 2023.

\bibitem[Zhu et~al.(2020)Zhu, Rawat, Zaheer, Bhojanapalli, Li, Yu, and Kumar]{FTw}
Zhu, C., Rawat, A.~S., Zaheer, M., Bhojanapalli, S., Li, D., Yu, F.~X., and Kumar, S.
\newblock Modifying memories in transformer models.
\newblock \emph{CoRR}, abs/2012.00363, 2020.

\end{thebibliography}
\bibliographystyle{icml2025}

\newpage
\appendix
\onecolumn

\section{Experimental Setup}\label{app:exp}
\subsection{Datasets}
UnKEBench \cite{UnKE} constructs a dataset containing 1,000 counterfactual unstructured texts, where knowledge is presented in an unstructured and relatively lengthy form, going beyond simple knowledge triplets or linear fact chains. These texts originate from ConflictQA \cite{conflictqa}, a benchmark specifically designed to distinguish LLMs' parameter memory from anti-memory. This approach is crucial for preventing the model from merging knowledge obtained during pretraining with knowledge acquired during the editing process. Moreover, it addresses the key challenge of determining whether the model learns target knowledge during training or editing, ensuring a clear boundary between pretraining knowledge and edited knowledge.

AKEW benchmark \cite{AKEW} considers three aspects: (1) \textit{Structured Facts}: Each structured fact consists of an isolated triplet for editing, sourced from existing datasets or knowledge bases. (2) \textit{Unstructured Facts}: Knowledge is presented in unstructured text form. To enable fair comparisons, each unstructured fact contains the same knowledge update as its corresponding structured fact. Compared to structured facts, unstructured facts exhibit greater complexity in natural language format, as they often encapsulate more implicit knowledge. (3) \textit{Extracted Triplets}: Triplets are extracted from unstructured facts using automated methods to investigate whether they can facilitate knowledge editing methods in handling unstructured knowledge. In this work, we primarily focus on unstructured factual knowledge.

EditEverything dataset integrates question-answering data from multiple domains, forming long and diverse knowledge formats that are more challenging to edit. Specifically, for mathematics, we select longer samples from the Orca-Math dataset \cite{math}, which includes grade school math word problems. For coding, we use the MBPP dataset \cite{code}, which consists of approximately 1,000 crowd-sourced Python programming problems solvable by entry-level programmers, covering programming fundamentals and standard library functionalities. For chemistry, we sample from the Camel-Chemistry dataset \cite{chemistry}, which contains problem-solution pairs generated from 25 chemistry topics, each with 25 subtopics and 32 problems per topic-subtopic pair. Lastly, for the news and poetry categories, since they often contain real-world knowledge that LLMs may already possess, we generate synthetic data using GPT-4o to ensure that the information is not already known by the model.

We present sample instances from the dataset in Figure \ref{fig:sample1}, Figure \ref{fig:sample2}, and Figure \ref{fig:sample3}.

\subsection{Evaluation Metrics} \label{app:exp_metric}
Following previous research on model editing for structured knowledge \cite{ROME, MEND}, existing evaluation metrics primarily focus on triplet-structured knowledge, where the goal is to assess the modification of factual triples (\textit{subject, relation, object}). Specifically, given an LLM $f$, an editing knowledge pair $(x, y)$, an equivalent knowledge query $x_e$, and unrelated knowledge pairs $(x_{loc}, y_{loc})$, three standard evaluation metrics are commonly used:

\textbf{Efficacy.} This metric quantifies the success of modifying the target knowledge in $f_{\mathcal{W}}$. It evaluates whether the edited LLM generates the desired target output $y$ when given the input $x$. Formally, it is defined as:
\begin{equation}
\mathbb{E}\left\{y=\mathop{\arg\max}\limits_{y'}\mathbb{P}_{f}(y'\left|x\right.)\right\}.
\end{equation}

\textbf{Generalization.} This metric assesses whether the model has generalized the newly edited knowledge beyond its specific form. It measures if the LLM correctly produces $y$ when given a semantically equivalent input $x_e$, indicating the degree to which the update propagates correctly across paraphrased or restructured queries:
\begin{equation}
\mathbb{E}\left\{y=\mathop{\arg\max}\limits_{y'}\mathbb{P}_{f}(y'\left|x_e\right.)\right\}.
\end{equation}

\textbf{Specificity.} This metric evaluates whether the editing operation is localized, ensuring that unrelated knowledge remains intact. It measures whether the model's response to an unrelated query $x_{loc}$ remains consistent with its original output $y_{loc}$:
\begin{equation}
\mathbb{E}\left\{y_{loc}=\mathop{\arg\max}\limits_{y'}\mathbb{P}_{f}(y'\left|x_{loc}\right.)\right\}.
\end{equation}

While these metrics are well-suited for structured knowledge editing, they are insufficient for evaluating long-form and diverse-formatted knowledge. Such knowledge is often verbose and complex, making it challenging to assess correctness solely based on Efficacy. In these cases, the model may generate an answer that captures the essential information yet fails an exact-match evaluation. To address this, we primarily follow the existing benchmarks for unstructured knowledge editing, incorporating more flexible evaluation methods suited for long-form responses.

Lexical similarity metrics include BLEU \cite{bleu} and various ROUGE scores (ROUGE-1, ROUGE-2, and ROUGE-L) \cite{rouge}. These are computed based on the \textit{original questions}, \textit{paraphrase question}, and \textit{sub-questions}, providing insights into the lexical and n-gram alignment between the model-generated text and the target answer. These metrics serve as the foundation for assessing the surface-level accuracy of edited content.

Semantic similarity is also considered (Bert Score) \cite{bertscore}, as word-level overlap alone is insufficient to capture the nuanced understanding required by the model. To address this, we utilize embedding-based encoders, specifically the all-MiniLM-L6-v2 model \footnote{https://huggingface.co/sentence-transformers/all-MiniLM-L6-v2}, to measure semantic similarity. This ensures a more balanced evaluation that extends beyond lexical matching, quantifying the depth of the model's comprehension.

\subsection{Baseline Methods}
\begin{itemize}
    \item \textbf{FT-L} \cite{FTw} is a knowledge editing approach that fine-tunes specific layers of the LLM using an autoregressive loss function. We reimplemented this method following the hyperparameter from the original paper.
    
    \item \textbf{MEND} \cite{MEND} is a hypernetwork-based efficient knowledge editing method. It trains a hypernetwork to capture patterns in knowledge updates by mapping low-rank decomposed fine-tuning gradients to LLM parameter modifications, enabling efficient and localized edits. Our implementation follows the original hyperparameter settings and completes training over the full dataset. 
    
    \item \textbf{ROME} \cite{ROME} is a method for modifying factual associations in LLM parameters. It identifies critical neuron activations in MLP layers through perturbation-based knowledge localization and modifies MLP layer weights using Lagrange remainders. Since ROME is not designed for large-scale edits, we follow the original paper’s settings and conduct multiple rounds of single-instance editing for evaluation.
    
    \item \textbf{MEMIT} \cite{MEMIT} extends ROME by enabling batch updates of factual knowledge. It utilizes least squares approximation to modify specific layer parameters across multiple layers, allowing simultaneous updates of large numbers of knowledge facts. We evaluate MEMIT in lifelong editing scenarios using the original paper’s configuration.
    
    \item \textbf{AlphaEdit} \cite{AlphaEdit} is a method designed to mitigate interference in LLM lifelong knowledge editing. It introduces a null-space projection mechanism that ensures parameter updates preserve previously edited knowledge while incorporating new updates. AlphaEdit has demonstrated state-of-the-art (SOTA) performance across multiple evaluation metrics while maintaining strong transferability. We follow the original paper’s hyperparameter configuration in our implementation.
    
    \item \textbf{UnKE} \cite{UnKE} improves knowledge editing by refining both the layer and token dimensions. In the layer dimension, it replaces local key-value storage with a non-local block-based mechanism, enhancing the representation capability of key-value pairs while integrating attention-layer knowledge. In the token dimension, it replaces "term-driven optimization" with "cause-driven optimization," which directly edits the final token while preserving contextual coherence. This eliminates the need for explicit term localization and prevents context loss.
\end{itemize}

\subsection{Implementation Details}
Our AnyEdit and AnyEdit* primarily follow the baseline configurations of MEMIT and UnKE, while other baselines adhere to their original implementation settings.
\begin{itemize}
    \item \textbf{AnyEdit on Llama3-8B-Instruct:} We select layers 4 to 8 for editing and apply a clamp norm factor of 4. The fact token is defined as the last token. The optimization process involves 25 gradient steps for updating the key-value representations, with a learning rate of 0.5. The loss is applied at layer 31, and we use a weight decay of 0.001. To maintain distributional consistency, we introduce a Kullback-Leibler (KL) regularization term with a factor of 0.0625. Furthermore, we enable hyperparameter $\lambda$ with an update weight of 15,000, using 100,000 samples from the Wikipedia dataset with a data type of float32. The module configurations follow MEMIT, where edits are applied to the MLP down projection layers of the selected transformer blocks. Additionally, for chunked editing, we set a chunk size of 40 tokens with no overlap.
    \item \textbf{AnyEdit on Qwen2.5-7B-Instruct:} Same as the above, except that the loss is applied at layer 27 and the chunk size is set to 50 tokens.
    \item \textbf{AnyEdit* on Llama3-8B-Instruct:} We select layer 7 for editing and apply a clamp norm factor of 4. The fact token is defined as the last token. The optimization process involves updating all parameters in both the attention and MLP layers. The gradient descent process utilizes a learning rate of 0.0002 with 50 optimization steps. For updating key-value representations, we use 25 gradient steps with a learning rate of 0.5. The loss is applied at layer 31, and we use a weight decay of 0.001. To preserve original knowledge, we sample 20 data points to constrain parameter updates. Additionally, for chunked editing, we set a chunk size of 40 tokens with no overlap.
    \item \textbf{AnyEdit* on Qwen2.5-7B-Instruct:} Same as the above, except that the loss is applied at layer 27 and the chunk size is set to 50 tokens.
\end{itemize}

\section{Locate-Then-Edit Paradigm \& Related Proof}
\subsection{Locate-Then-Edit Paradigm}\label{app:model_edit}
Recent studies in LLM behavior analysis \citep{SGU-SQL, FlipAttack} have explored various aspects of model manipulation, motivating our approach to structured knowledge editing. Following prior works on model editing, the detailed descriptions of specific methods in this section are based on MEMIT \cite{MEMIT}, AlphaEdit \cite{AlphaEdit} and ECE \cite{ECE}. We adhere to their formulations and methodological explanations to ensure consistency and clarity in presenting these approaches.

The locate-then-edit method primarily focuses on triplet-structured knowledge in the form of $(s, r, o)$, such as modifying $(\text{Olympics}, \text{were held in}, \text{Tokyo})$ to $(\text{Olympics}, \text{were held in}, \text{Paris})$. Given new knowledge $(x_e, y_e)$, a triplet can be represented as $x_e = (s, r)$ and $y_e = o$.

We first refine the auto-regressive language model formulation in Section \ref{sec:method:pre}. Let $f$ be a decoder-only model with $L$ layers, processing input sequence $x = (x_0, x_1, \dots, x_T)$ to predict the next token:
\begin{equation}
    \begin{aligned}
        \vh_t^l(x) &= \vh_t^{l - 1}(x) + \va_t^l(x) + \vm_t^l(x), \\
        \va_t^l &= \text{attn}^l(\vh_0^{l - 1}, \vh_1^{l - 1}, \dots, \vh_t^{l - 1}), \\
        \vm_t^l &= \mW_{\text{out}}^l \sigma(\mW_{\text{in}}^l \gamma(\vh_t^{l - 1}+\va_t^l)),
    \end{aligned}
\end{equation}
where $\vh_t^l$ denotes the hidden state of token $t$ at layer $l$, $\va_t^l$ is the attention output, and $\vm_t^l$ is the feedforward (FFN) output. Here, $\mW_{\text{in}}^l$ and $\mW_{\text{out}}^l$ are weight matrices, $\sigma$ is a nonlinear activation function, and $\gamma$ denotes layer normalization.

\textbf{Key-Value Memory Structure}. Locate-then-edit assumes that factual knowledge is stored in the FFN layers and treats them as linear associative memory \cite{key_value}. Specifically, $\mW_{\text{out}}^l$ is conceptualized as a key-value memory structure:
\begin{equation}
    \begin{aligned}
        \underbrace{\vm_t^l}_{\vv} = \mW_{\text{out}}^l \underbrace{\sigma(\mW_{\text{in}}^l \gamma(\vh_t^{l-1}+\va^l))}_{\vk}. \label{eqapp:define_kv}
    \end{aligned}
\end{equation}
Here, the MLP input-output pair at token $t$ and layer $l$ serves as the key-value pair. Casual Tracing is typically used to locate the target token and layer by injecting Gaussian noise into hidden states and incrementally restoring them to analyze output recovery. For more details, please refer to ROME \cite{ROME}.

\textbf{Computing Key-Value.} For editing knowledge $(x_e, y_e)$, we compute its corresponding key-value pair $(\vk^*, \vv^*)$. The key $\vk^*$ is derived via forward propagation of $x_e$, while the value $\vv^*$ is optimized using gradient descent:
\begin{equation}
    \vv^* = \vv + \arg \min_{\bm{\delta}^l} \left( -\log \mathbb{P}_{f(\vh_t^l + \bm{\delta}^l)} [y_e \mid x_e] \right).
\end{equation}
Here, $f(\vh_t^l + \bm{\delta}^l)$ represents the model output when the FFN output $\vh_t^l$ is replaced with $\vh_t^l + \bm{\delta}^l$. 

Methods such as ROME \cite{ROME}, MEMIT \cite{MEMIT}, and AlphaEdit \cite{AlphaEdit} focus on triplets $(s, r, o)$, selecting the last token of the subject $s$ as the target token. In contrast, UnKE \cite{UnKE} extends to unstructured text, using the last token of $x_e$ as the target.

To insert new knowledge $(\vk^*, \vv^*)$ into the key-value memory, we solve the constrained least squares problem:
\begin{align*}
    \min_{\hat{\mW}} &\quad \left\lVert \hat{\mW}\mK - \mV \right\rVert \\
    \text{s.t.} &\quad \hat{\mW}\vk^* = \vv^*.
\end{align*}
The final parameter update can be computed via ROME/MEMIT/AlphaEdit's closed-form solution or UnKE's gradient-based optimization.

For clarity, let $\tilde{\mW}$ denote the edited weight of $\mW_{\text{out}}^l$ in the MLP, and let $\mW$ represent its original weight. The final parameter update can be computed using the closed-form solutions of ROME/MEMIT/AlphaEdit or the gradient-based optimization method in UnKE.

\textbf{Weights Update in ROME.} The ROME method derives a closed-form solution to the constrained least-squares problem for updating MLP parameters:
\begin{equation}
    \tilde{\mW} = \mW + \frac{(\vv^\ast - \mW\vk^\ast) (\mC^{-1} \vk^\ast) ^ {T}}{(\mC^{-1} \vk^\ast) ^ {T} \vk^\ast},
\end{equation}
where $\mC = \mK \mK^T$. The matrix $\mC$ is estimated using 100,000 samples of hidden states $\vk$ obtained from tokens sampled in-context from the entire Wikipedia dataset.

\textbf{Weights Update in MEMIT.} Since the above solution updates only a single knowledge sample at a time, MEMIT improves upon it by avoiding Lagrange multipliers and instead using a relaxed constraint formulation. The problem is reformulated by maintaining a factual set $\{\mK_1, \mV_1\}$ containing $u$ new associations while preserving the original set $\{\mK_0, \mV_0\}$ with $n$ existing associations:
\begin{equation}
\begin{gathered}
    \mK_0 = \left[\vk_1 \mid \vk_2 \mid \dots \mid \vk_n\right], \quad \mV_0 = \left[\vv_1 \mid \vv_2 \mid \dots \mid \vv_n\right], \\
    \mK_1 = \left[\vk^\ast_{n+1} \mid \vk^\ast_{n+2} \mid \dots \mid \vk^\ast_{n+u}\right], \quad \mV_1 = \left[\vv^\ast_{n+1} \mid \vv^\ast_{n+2} \mid \dots \mid \vv^\ast_{n+u}\right].
\end{gathered}
\end{equation}
Here, $\vk$ and $\vv$ are defined as in Eq.~\ref{eqapp:define_kv}, and their subscripts denote knowledge indices. The objective function is given by:
\begin{equation}
    \tilde{\mW} \triangleq \argmin_{\hat{\mW}} \left( \sum_{i=1}^{n} \left\| \hat{\mW} \vk_i - \vv_i \right\|^2 + \sum_{i=n+1}^{n+u} \left\| \hat{\mW} \vk_i - \vv^\ast_i \right\|^2 \right).
\end{equation}
Applying the normal equation \citep{normal_equation}, the closed-form solution is:
\begin{equation}
    \tilde{\mW} = \left( \mV_1 - \mW \mK_1 \right) \mK_1^T \left( \mK_0 \mK_0^T + \mK_1 \mK_1^T \right)^{-1} + \mW.
\end{equation}

\textbf{Weights Update in AlphaEdit.} AlphaEdit addresses the imbalance between old and new knowledge in lifelong learning. It protects existing knowledge using a null-space projection constraint, ensuring that the update $\bm{\Delta}$ to $\mW_{\text{out}}^l$ is always projected onto the null space of $\mK_0 \mK_0^T$. The final parameter update, refining MEMIT, is:
\begin{equation}
    \tilde{\mW} = \left( \mV_1 - \mW \mK_1 \right) \mK_1^T \mP \left( \mK_p \mK_p^T \mP + \mK_1 \mK_1^T \mP + \mI \right)^{-1}+ \mW.
\end{equation}

\textbf{Weights Update in UnKE.} Unlike previous methods, UnKE considers the entire input to layer $l$, denoted as $f^l$, rather than just the MLP input. The output remains $f^l$'s activation values. The parameter update is applied to the entire layer rather than a single weight matrix. Given the knowledge sets $\{\mK_0, \mV_0\}$ and $\{\mK_1, \mV_1\}$, the optimization objective is formulated as:
\begin{equation}
    \tilde{\Theta}^l \triangleq \argmin_{\hat{\Theta}^l} \left( \sum_{i=1}^{n} \left\|  f_{\hat{\Theta}^l}^l(\vk_i) - \vv_i \right\|^2 + \sum_{i=n+1}^{n+u} \left\|  f_{\hat{\Theta}^l}^l(\vk_i) - \vv^\ast_i \right\|^2 \right),
\end{equation}
where $\Theta^l$ denotes the entire set of parameters in layer $l$. Since a closed-form solution is not feasible, UnKE employs gradient descent to iteratively update $\Theta^l$.

\subsection{Proof of Optimization-Conditional Mutual Information Equivalence} \label{app:proof_cmi}
\begin{theorem}
The optimization objective  
\begin{equation}
    \bm{\delta}^* = \argmin_{\bm{\delta}} \left( -\log \mathbb{P}_{f(\vh_t+\bm{\delta})}(Y \mid X) \right), \label{eq:opt}
\end{equation}  
is equivalent to maximizing the conditional mutual information (CMI) between $X$ and $Y$ given the perturbed hidden state $\vh'$:  
\begin{equation}
    \vh' = \argmax_{\vh'} I(X; Y \mid \vh'). \label{eq:cmi}
\end{equation}
\end{theorem}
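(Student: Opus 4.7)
The plan is to prove the equivalence by decomposing the conditional mutual information into entropy terms and matching the optimization-relevant term with the negative log-likelihood in \Eqref{eq:opt}. First I would invoke the standard identity
\[
I(X;Y\mid \vh') \;=\; H(Y\mid \vh') \;-\; H(Y\mid X,\vh'),
\]
which separates the CMI into a marginal-conditional entropy and a joint-conditional entropy. The second term, by its definition, equals $-\mathbb{E}_{X,Y}[\log \mathbb{P}(Y\mid X,\vh')]$, and for the single editing instance $(X,Y)$ under consideration, this expectation reduces to $-\log \mathbb{P}_{f(\vh_t+\bm{\delta})}(Y\mid X)$ once we identify $\vh'$ with $\vh_t+\bm{\delta}$. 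Hence the NLL objective in \Eqref{eq:opt} is precisely an empirical surrogate for $H(Y\mid X,\vh')$, so minimizing it drives $H(Y\mid X,\vh')$ downward.

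Second, I would argue that $H(Y\mid \vh')$ can be treated as invariant with respect to the perturbation $\bm{\delta}$. The justification rests on the observation that $\vh'$ is obtained by an additive perturbation to a single token's hidden state for a fixed editing instance, so the marginal law of $Y$ given $\vh'$ — which averages over all plausible inputs consistent with that hidden state — reflects a global property of the base model rather than a property tied to the specific perturbation direction. In the single-instance setting, one may simply regard $H(Y\mid \vh')$ as a constant relative to $\bm{\delta}$. Combining the two steps, minimizing the NLL in \Eqref{eq:opt} is equivalent to minimizing $H(Y\mid X,\vh')$, which in turn maximizes $I(X;Y\mid \vh') = H(Y\mid \vh') - H(Y\mid X,\vh')$, yielding \Eqref{eq:cmi}.

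The main obstacle will be making the invariance of $H(Y\mid \vh')$ precise: strictly speaking, shifting $\vh'$ alters $P(Y\mid \vh')$ and hence its entropy, and the optimization is performed over $\bm{\delta}$ rather than over a data distribution. I therefore expect the argument to be framed either as an approximate equivalence — valid because the perturbation is localized to one token and the gradient of $H(Y\mid \vh')$ with respect to $\bm{\delta}$ is dominated by that of $H(Y\mid X,\vh')$ — or as an exact equivalence under the explicit assumption that $H(Y\mid \vh')$ is independent of the perturbation direction. With this caveat isolated, the entropy decomposition above then delivers the claimed equivalence in one line.
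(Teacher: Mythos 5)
Your proposal takes essentially the same route as the paper. The paper expands $I(X;Y\mid\vh')$ in integral form and splits it into a Term~$\mathcal{A}=\mathbb{E}[\log p(y\mid x,\vh')]=-H(Y\mid X,\vh')$ and a Term~$\mathcal{B}=\mathbb{E}[\log p(y\mid\vh')]=-H(Y\mid\vh')$, identifies the NLL objective with maximizing Term~$\mathcal{A}$, and then drops Term~$\mathcal{B}$ --- exactly the decomposition $I(X;Y\mid\vh')=H(Y\mid\vh')-H(Y\mid X,\vh')$ you begin from and the same final move.

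The one substantive difference is how the discarded term is handled, and here your candor is actually an improvement. The paper justifies ignoring Term~$\mathcal{B}$ by observing that it is ``independent of $X$ given $\vh'$,'' and then concludes ``Since $\mathcal{B}$ does not affect the optimization over $\vh'$, we focus on maximizing $\mathcal{A}$.'' The first observation is true but beside the point: the optimization is over $\vh'$ (equivalently $\bm{\delta}$), and $H(Y\mid\vh')$ certainly depends on $\vh'$, so independence from $X$ does not imply it drops out of the argmax. You flag precisely this issue --- that $H(Y\mid\vh')$ is not genuinely invariant to $\bm{\delta}$ and must be taken as constant by assumption or approximation --- which is the honest reading of what both proofs actually require. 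So: same decomposition, same key step, same gap, but you name the gap where the paper papers over it with a non sequitur.
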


\begin{proof}
Starting from the definition of CMI, we expand it via the integral form:  
\begin{equation}
I(X; Y \mid \vh') = \int p(x, y, \vh') \log \frac{p(y \mid x, \vh')}{p(y \mid \vh')} \, dx dy d\vh'.
\end{equation}  
This splits into two entropy terms:  
\begin{align}
I(X; Y \mid \vh') = \underbrace{\int p(x, y, \vh') \log p(y \mid x, \vh') \, dx dy d\vh'}_{\text{Term } \mathcal{A}} - \underbrace{\int p(x, y, \vh') \log p(y \mid \vh') \, dx dy d\vh'}_{\text{Term } \mathcal{B}}. \label{eq:split}
\end{align}  

Term $\mathcal{A}$ simplifies to the expectation:  
\begin{equation}
\mathcal{A} = \mathbb{E}_{p(\vh')} \mathbb{E}_{p(x, y \mid \vh')} \left[ \log p(y \mid x, \vh') \right],
\end{equation}  
while Term $\mathcal{B}$ is independent of $X$ given $\vh'$. Since $\mathcal{B}$ does not affect the optimization over $\vh'$, we focus on maximizing $\mathcal{A}$.  

By definition, $\mathbb{P}_{f(\vh')}(Y \mid X) = p(y \mid x, \vh')$. Thus, minimizing the negative log-likelihood in \eqref{eq:opt} directly maximizes $\mathcal{A}$, which is equivalent to maximizing $I(X; Y \mid \vh')$. Substituting $\vh' = \vh_t + \bm{\delta}^*$, we conclude:  
\begin{equation}
\vh' = \argmax_{\vh'} I(X; Y \mid \vh'),
\end{equation}  
thereby establishing the equivalence.  
\end{proof}

\subsection{Proof of the Decomposition of Mutual Information}\label{app:proof_decom}
To rigorously derive Equation \eqref{eq:final_MI}, we start from the mutual information (MI) decomposition given in Equation \eqref{eq:decom}:
\begin{equation}
    I(X; Y \mid \vh'_1, \dots, \vh'_K) = \sum_{k=1}^{K} I(X; Y_k \mid Y_1, \dots, Y_{k-1}, \vh'_1, \dots, \vh'_K).
\end{equation}

\textbf{Step 1: Application of the First Property.}
The first key property states that later hidden states do not influence earlier token generation:
\begin{equation}
    H(Y_p \mid \vh'_q) = H(Y_p), \quad \text{for } p < q.
\end{equation}
Since mutual information is defined as:
\begin{equation}
    I(X; Y_k \mid Y_1, \dots, Y_{k-1}, \vh'_1, \dots, \vh'_K) = H(Y_k \mid Y_1, \dots, Y_{k-1}, \vh'_1, \dots, \vh'_K) - H(Y_k \mid X, Y_1, \dots, Y_{k-1}, \vh'_1, \dots, \vh'_K).
\end{equation}
Since $\vh'_q$ for $q > k$ does not affect $Y_k$, we can simplify:
\begin{equation}
    H(Y_k \mid Y_1, \dots, Y_{k-1}, \vh'_1, \dots, \vh'_K) = H(Y_k \mid Y_1, \dots, Y_{k-1}, \vh'_1, \dots, \vh'_k).
\end{equation}

\textbf{Step 2: Application of the Second Property.}
The second key property states that once $Y_k$ is determined, conditioning on $Y_k$ subsumes conditioning on $\vh'_k$:
\begin{equation}
    H(\cdot \mid Y_k) = H(\cdot \mid Y_k, \vh'_k).
\end{equation}
Using this, we rewrite the MI term:
\begin{equation}
    I(X; Y_k \mid Y_1, \dots, Y_{k-1}, \vh'_1, \dots, \vh'_K) = I(X; Y_k \mid Y_1, \dots, Y_{k-1}, \vh'_k).
\end{equation}

\textbf{Step 3: Applying the Conditional Mutual Information Decomposition.}
Using the decomposition formula for conditional mutual information, each term can be written as:
\begin{equation}
    I(X; Y_k \mid Y_1, \dots, Y_{k-1}, \vh'_k) = I(X, Y_1, \dots, Y_{k-1}; Y_k \mid \vh'_k) - I(Y_1, \dots, Y_{k-1}; Y_k \mid \vh'_k).
\end{equation}
In the optimization process, since we are given $X$, the second term is ignored:
\begin{equation}
    I(X; Y_k \mid Y_1, \dots, Y_{k-1}, \vh'_k) = I(X, Y_1, \dots, Y_{k-1}; Y_k \mid \vh'_k).
\end{equation}
Substituting this result back into our summation, we arrive at the desired decomposition:
\begin{equation}
    I(X; Y \mid \vh'_1, \dots, \vh'_K) = \sum_{k=1}^{K} I(X, Y_1, \dots, Y_{k-1}; Y_k \mid \vh'_k).
\end{equation}
This completes the proof.
\clearpage
\newpage

\section{More Experimental Results}\label{app:exp_result}
\subsection{Case Study}

We selected several editing samples from the UnKEBench and EditEverything datasets as case studies to analyze the long-form knowledge edit performance. 

From the experimental results, AnyEdit effectively restores the target text in terms of both textual consistency and generalization to paraphrase questions. In contrast, other baseline methods often fail to achieve effective editing. Although the UnKE method also demonstrates strong editing performance, it exhibits a noticeable performance drop on paraphrase questions.

\subsubsection{Case 1}

\begin{tcolorbox}[boxrule=0.5pt, left=0pt, right=0pt, top=2.5pt, bottom=2.5pt, title={Model Editing Case Study on Llama3-8B-Instruct}]
    \centering
    \renewcommand{\arraystretch}{1.8}
    \begin{tabular}{p{0.18\linewidth} p{0.76\linewidth}}
        \makecell[c]{\textbf{Input}} & \makecell[c]{What is John Mayne's background and experience in journalism?} \\
        \hline
        \makecell[c]{\textbf{Paraphrase}} & \makecell[c]{Can you provide information about John Mayne's journalism background and experience?} \\
        \hline
        \makecell[c]{\textbf{Target Output}} & \makecell*[{{p{13cm}}}]{John Mayne is a journalist who has been working in the field for over a decade. He has written for several reputable publications, including \tar{The New York Times} and \tar{The Guardian}. In an interview with \tar{The Huffington Post}, Mayne discussed his passion for journalism and his commitment to reporting on important issues. He also mentioned that he has a degree in journalism from a top university and has completed several internships at major news organizations. Mayne's extensive experience and education make him a highly qualified journalist and a valuable asset to any newsroom.} \\
        \hline
        \multicolumn{2}{c}{\textbf{\large LLM-edited Generation}} \\ 
        \hline
        \makecell[c]{\textbf{MEMIT}} & \makecell*[{{p{13cm}}}]{John Mayne is a journalist and writer, but I couldn't find much about his personal background. However, he has been in the industry for several years, contributing to various publications. According to LinkedIn, he has worked for major outlets including BBC and \tar{The Guardian}(...) John Mayne's experience spans politics, business, and culture, and he has been recognized for his in-depth reporting.} \\
        \hline
        \makecell[c]{\textbf{AlphaEdit}} & \makecell*[{{p{13cm}}}]{John Mayne is a British journalist and author with a long career in media. He has written for major newspapers such as The Times and The Sunday Times. Mayne has covered diverse topics, including politics and culture, and worked as a foreign correspondent reporting from the Middle East, Africa, and Asia... In addition, he has taught journalism at universities and received multiple awards for his writing.} \\
        \hline
        \makecell[c]{\textbf{UnKE}} & \makecell*[{{p{13cm}}}]{John Mayne is a journalist who has been working in the field for several years. He has written for a number of publications and has a strong track record of producing high-quality, well-researched articles. In addition to his writing experience, Mayne has also worked as an editor and has a strong understanding of the journalism industry. He is well-respected by his peers and is known for his ability to produce engaging and informative articles.} \\
        \hline
        \makecell[c]{\textbf{AnyEdit}} & \makecell*[{{p{13cm}}}]{ John Mayne is a journalist who has been working in the field for over a decade. He has written for several reputable publications, including \tar{The New York Times} and \tar{The Guardian}. In an interview with \tar{The Huffington Post}, Mayne discussed his passion for journalism and his commitment to reporting on important issues. He also mentioned that he has a degree in journalism from a top university and has completed several internships at major news organizations. Mayne's extensive experience and education make him a highly qualified journalist and a valuable asset to any newsroom.} \\
    \end{tabular}
\end{tcolorbox}

\subsubsection{Case 2}

\begin{tcolorbox}[boxrule=0.5pt, left=0pt, right=0pt, top=2.5pt, bottom=2.5pt, title={Model Editing Case Study on Qwen2.5-7B-Instruct}]
    \centering
    \renewcommand{\arraystretch}{1.8}
    \begin{tabular}{p{0.18\linewidth} p{0.76\linewidth}}
        \makecell[c]{\textbf{Input}} & \makecell[c]{find the value of $log_y(x^4)$ if $log_x(y^3) = 2$} \\
        \hline
        \makecell[c]{\textbf{Target Output}} & \makecell*[{{p{13cm}}}]{To find the value of $log_y(x^4)$, we can use the property of logarithms that states: $log_b(a) = 1 / log_a(b)$. Given that $log_x(y^3) = 2$, we can rewrite this as: $y^3 = x^2$. Now, we want to find $log_y(x^4)$. We can use the property mentioned above: $log_y(x^4) = 1 / log_x(y^4)$. But we don't have $log_x(y^4)$; we have $log_x(y^3)$. So, we need to manipulate the expression to use the given information. We know that $y^3 = x^2$, so we can raise both sides to the power of $4/3$ to get $y^4$: $(y^3)^{(4/3)} = (x^2)^{(4/3)}$. $y^4 = x^{(8/3)}$. Now we can write $log_x(y^4)$ in terms of $log_x(y^3)$: $log_x(y^4) = log_x((y^3)^{(4/3)})$. Using the power rule of logarithms ($log_b(a^c) = c * log_b(a)$), we get: $log_x(y^4) = (4/3) * log_x(y^3)$. Since we know $log_x(y^3) = 2$, we can substitute this value in: $log_x(y^4) = (4/3) * 2$. $log_x(y^4) = 8/3$. Now we can find $log_y(x^4)$ using the reciprocal property: $log_y(x^4) = 1 / log_x(y^4)$. $log_y(x^4) = 1 / (8/3)$. $log_y(x^4) = 3/8$. Therefore, the value of $log_y(x^4)$ is \tar{$3/8$}.} \\
        \hline
        \multicolumn{2}{c}{\textbf{\large LLM-edited Generation}} \\ 
        \hline
        \makecell[c]{\textbf{MEMIT}} & \makecell*[{{p{13cm}}}]{To find the value of $log_y(x^4)$, we can use the change of base formula and the given information. Given: $log_x(y^3) = 2$. Using the change of base formula, we can rewrite this as: $log_x(y^3) = (log_y(y^3)) / (log_y(x))$. Since $log_y(y^3) = 3$ (because $y^3$ is the argument raised to the power of 3), we have: $2 = 3 / (log_y(x))$. Now, we can solve for $log_y(x)$: $log_y(x) = 3/2$. Next, we want to find $log_y(x^4)$. Using the power rule of logarithms, we can write: $log_y(x^4) = 4 * log_y(x)$. Substituting the value we found for $log_y(x)$: $log_y(x^4) = 4 * (3/2)$. Simplifying: $log_y(x^4) = 6$. Therefore, the value of $log_y(x^4)$ is $6$.} \\
        \hline
        \makecell[c]{\textbf{AlphaEdit}} & \makecell*[{{p{13cm}}}]{To find the value of $log_y(x^4)$, we can use the change of base formula and the given information: $log_x(y^3) = 2$. Using the change of base formula, we can rewrite this as: $y^3 = x^2$. Now, we want to find $log_y(x^4)$. Using the properties of logarithms, we can rewrite this as: $log_y(x^4) = 4 * log_y(x)$. We can express $log_y(x)$ in terms of $log_x(y)$ using the change of base formula: $log_y(x) = 1 / log_x(y)$. Since we know that $log_x(y^3) = 2$, we can write: $log_x(y^3) = 3 * log_x(y) = 2$. Therefore: $log_x(y) = 2/3$. Substituting this into our expression for $log_y(x)$: $log_y(x) = 1 / (2/3) = 3/2$. Finally, substituting this back into our expression for $log_y(x^4)$: $log_y(x^4) = 4 * (3/2) = 6$. Therefore, the value of $log_y(x^4)$ is $6$.} \\
        \hline
        \makecell[c]{\textbf{UnKE}} & \makecell*[{{p{13cm}}}]{To find the value of $log_y(x^4)$, we can use the property of logarithms that states: $log_b(a) = 1 / log_a(b)$. Given that $log_x(y^3) = 2$, we can rewrite this as $y^3 = x^2$. Now, we can use the property to find $log_y(x^4)$: $log_y(x^4) = 1 / log_x(y^3)$ $= 1 / (1 / x^2)$ $= x^2$. So, the value of $log_y(x^4)$ is $x^2$. Therefore, the correct answer is $x^2$.} \\
        \hline
        \makecell[c]{\textbf{AnyEdit}} & \makecell*[{{p{13cm}}}]{To find the value of $log_y(x^4)$, we can use the property of logarithms that states: $log_b(a) = 1 / log_a(b)$. Given that $log_x(y^3) = 2$, we can rewrite this as: $y^3 = x^2$. Now, we want to find $log_y(x^4)$. We can use the property mentioned above: $log_y(x^4) = 1 / log_x(y^4)$. But we don't have $log_x(y^4)$; we have $log_x(y^3)$. So, we need to manipulate the expression to use the given information. We know that $y^3 = x^2$, so we can raise both sides to the power of $4/3$ to get $y^4$: $(y^3)^{(4/3)} = (x^2)^{(4/3)}$. $y^4 = x^{(8/3)}$. Now we can write $log_x(y^4)$ in terms of $log_x(y^3)$: $log_x(y^4) = log_x((y^3)^{(4/3)})$. Using the power rule of logarithms ($log_b(a^c) = c * log_b(a)$), we get: $log_x(y^4) = (4/3) * log_x(y^3)$. Since we know $log_x(y^3) = 2$, we can substitute this value: $log_x(y^4) = (4/3) * 2$. $log_x(y^4) = 8/3$. Now we can use the property of logarithms: $log_y(x^4) = 1 / log_x(y^4)$. $log_y(x^4) = 1 / (8/3)$. $log_y(x^4) = 3/8$. Therefore, the value of $log_y(x^4)$ is \tar{$3/8$}.} \\
    \end{tabular}
\end{tcolorbox}
\clearpage
\newpage

\subsection{Supplementary Experimental Results on RQ1 \& RQ2} \label{app:exp_result_1}
We present a comprehensive evaluation of all metrics on the UnKEBench and AKEW datasets in Table \ref{tab:app_1} and Table \ref{tab:app_2}. The results demonstrate that UnKE consistently outperforms other baselines across both original and paraphrase question evaluations. Notably, UnKE+, which integrates AnyEdit’s autoregressive editing paradigm, achieves even higher scores in lexical similarity (BLEU, ROUGE-1/2/L) and semantic similarity (BERT Score), indicating its superior ability to preserve and generalize edited knowledge. In contrast, MEMIT and AlphaEdit struggle with paraphrase generalization, showing significantly lower performance on the right side of `/', suggesting that these methods fail to robustly transfer edited knowledge across rephrased contexts. While MEMIT+ and AlphaEdit+ improve over their base versions, their performance still lags behind UnKE and UnKE+.

Overall, UnKE+ achieves the best balance between precise knowledge modification and robust generalization, confirming that combining UnKE with autoregressive fine-tuning leads to stronger and more reliable knowledge editing in LLMs.
\begin{table*}[h]
\caption{Performance comparison in UnKEBench. The `+' symbol indicates results incorporating AnyEdit's autoregressive editing paradigm. The left side of `/' represents the LLM's edited output for original questions, while the right side represents the edited output for paraphrase questions.}
    \label{tab:app_1}
    \centering
    \renewcommand{\arraystretch}{1.2}
    \setlength{\tabcolsep}{4pt}
    \resizebox{\textwidth}{!}{
    \begin{tabular}{l cccc ccc}
        \toprule
        \multirow{2}{*}{\textbf{Method}} & \multicolumn{4}{c}{\textbf{Lexical Similarity}} & \multicolumn{1}{c}{\textbf{Semantic Similarity}} & \textbf{Sub Questions} \\
        \cmidrule(lr){2-5} \cmidrule(lr){6-6} \cmidrule(lr){7-7} 
        & BLEU & ROUGE-1 & ROUGE-2 & ROUGE-L & BERT Score & ROUGE-L \\
        \midrule
        \multicolumn{7}{l}{\textbf{Based on Llama3-8B-Instruct}} \\
        \midrule
        UnKE        & 93.56 / 78.09  & 93.61 / 79.26  & 91.42 / 71.73  & 93.33 / 78.42  & 98.34 / 93.38    & 37.87 \\
        UnKE+       & 99.67 / 84.60  & 99.69 / 86.31  & 99.57 / 81.18  & 99.68 / 85.75  & 99.86 / 94.70    & 41.45 \\
        MEMIT       & 25.57 / 22.88  & 32.67 / 30.75  & 14.51 / 12.37  & 30.49 / 28.65  & 76.21 / 74.25    & 22.56 \\
        MEMIT+      & 88.88 / 81.38  & 93.26 / 86.53  & 90.32 / 80.61  & 92.96 / 85.91  & 97.76 / 95.60    & 41.67 \\
        AlphaEdit   & 21.29 / 20.24  & 28.62 / 27.99  & 11.36 / 10.24  & 26.59 / 25.92  & 73.92 / 72.96    & 20.71 \\
        AlphaEdit+  & 75.02 / 66.35  & 81.70 / 73.47  & 74.35 / 62.74  & 80.92 / 72.22  & 94.19 / 91.51    & 40.56 \\
        \midrule
        \multicolumn{7}{l}{\textbf{Based on Qwen2.5-7B-Instruct}} \\
        \midrule
        UnKE        & 91.92 / 70.61  & 91.41 / 68.47  & 87.75 / 56.34  & 91.01 / 67.00  & 96.97 / 89.17    & 38.12 \\
        UnKE+       & 98.52 / 82.48  & 98.85 / 83.36  & 98.43 / 77.03  & 98.82 / 82.60  & 99.35 / 94.81    & 42.24 \\
        MEMIT       & 45.07 / 40.81  & 40.73 / 36.75  & 19.59 / 15.87  & 38.04 / 34.07  & 78.03 / 76.50    & 24.75 \\
        MEMIT+      & 91.31 / 77.23  & 95.10 / 80.88  & 92.93 / 72.50  & 94.89 / 79.98  & 98.05 / 93.56    & 42.38 \\
        AlphaEdit   & 49.71 / 45.21  & 45.42 / 41.06  & 24.63 / 19.85  & 42.77 / 38.26  & 80.48 / 78.38    & 25.37 \\
        AlphaEdit+  & 97.77 / 83.09  & 98.20 / 84.18  & 97.40 / 77.38  & 98.14 / 83.40  & 99.08 / 94.51    & 41.58 \\
        \bottomrule
    \end{tabular}
    }
    
\end{table*}

\begin{table*}[h]
\caption{Performance comparison in AKEW (Counterfact). The `+' symbol indicates results incorporating AnyEdit's autoregressive editing paradigm. The left side of `/` represents the LLM's edited output for original questions, while the right side represents the edited output for paraphrase questions.}
    \label{tab:app_2}
    \centering
    \renewcommand{\arraystretch}{1.2}
    \setlength{\tabcolsep}{4pt}
    \resizebox{\textwidth}{!}{
    \begin{tabular}{l cccc ccc}
        \toprule
        \multirow{2}{*}{\textbf{Method}} & \multicolumn{4}{c}{\textbf{Lexical Similarity}} & \multicolumn{1}{c}{\textbf{Semantic Similarity}} & \textbf{Sub Questions} \\
        \cmidrule(lr){2-5} \cmidrule(lr){6-6} \cmidrule(lr){7-7} 
        & BLEU & ROUGE-1 & ROUGE-2 & ROUGE-L & BERT Score & ROUGE-L \\
        \midrule
        \multicolumn{7}{l}{\textbf{Based on Llama3-8B-Instruct}} \\
        \midrule
        MEMIT       & 33.44 / 18.13  & 34.46 / 17.44  & 16.29 / 4.74   & 32.20 / 16.10  & 76.44 / 47.80  & 39.98\\
        MEMIT+      & 85.41 / 38.78  & 96.07 / 47.61  & 94.21 / 32.37  & 95.87 / 46.00  & 97.76 / 62.63  & 64.07\\
        UnKE        & 98.43 / 36.99  & 98.43 / 34.58  & 97.78 / 19.37  & 98.37 / 32.89  & 99.62 / 59.62  & 63.22\\
        UnKE+       & 99.98 / 45.23  & 99.98 / 46.57  & 99.96 / 35.41  & 99.98 / 45.31  & 99.95 / 64.24  & 59.03\\
        AlphaEdit   & 23.36 / 16.25  & 26.92 / 15.00  & 10.81 / 3.61   & 24.95 / 13.79  & 72.63 / 44.67  & 35.76 \\
        AlphaEdit+  & 79.60 / 40.67  & 84.49 / 41.11  & 78.00 / 26.60  & 83.76 / 39.51  & 96.51 / 65.14  & 57.05 \\
        \midrule
        \multicolumn{7}{l}{\textbf{Based on Qwen2.5-7B-Instruct}} \\
        \midrule
        MEMIT       & 45.29 / 32.83  & 41.68 / 28.01  & 20.38 / 8.79   & 38.95 / 25.73  & 77.19 / 56.04  & 43.51\\
        MEMIT+      & 90.55 / 44.32  & 95.33 / 45.56  & 93.12 / 27.38  & 95.09 / 43.49  & 98.08 / 65.40  & 55.10\\
        UnKE        & 91.53 / 38.59  & 90.91 / 31.53  & 87.06 / 12.11  & 90.44 / 29.27  & 97.34 / 59.29  & 49.97\\
        UnKE+       & 98.95 / 34.68  & 99.01 / 35.23  & 98.59 / 15.59  & 98.99 / 32.95  & 99.63 / 60.78  & 51.58\\
        AlphaEdit   & 49.97 / 34.65  & 48.15 / 30.02  & 27.76 / 10.38  & 45.55 / 27.69  & 80.66 / 56.99  & 45.12\\
        AlphaEdit+  & 97.61 / 46.97  & 97.80 / 47.63  & 96.89 / 30.31  & 97.73 / 45.84  & 99.10 / 66.10  & 54.99\\
        \bottomrule
    \end{tabular}
    }
    
\end{table*}

\subsection{Supplementary Experimental Results on RQ4}\label{app:exp_result_4}
\begin{figure}[t]
\begin{center}
\includegraphics[width=0.6\linewidth, keepaspectratio]{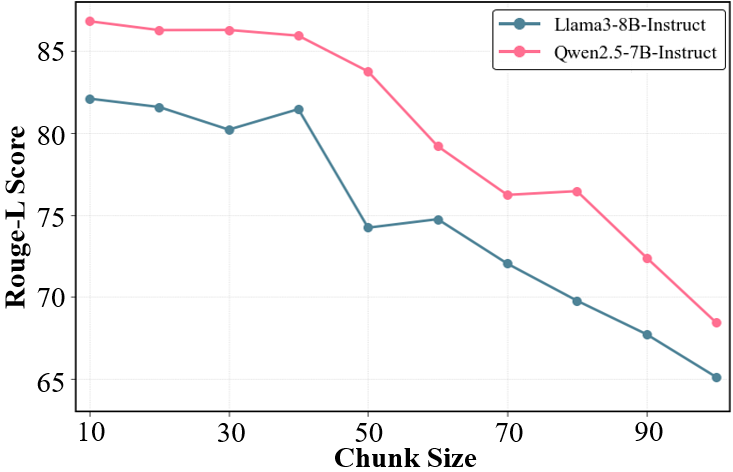}
\caption{The relationship between AnyEdit's editing performance and chunk size in long-form diverse-formatted knowledge.}
\label{fig:exp_3}
\end{center}
\end{figure}

 The experimental results of relationship between AnyEdit's editing performance and chunk size in long-form diverse-formatted knowledge are presented in Figure \ref{fig:exp_3}. Based on these results, we draw the following observation:.

\begin{itemize}[leftmargin=*]
    \item \textbf{Obs 7: The editing performance of AnyEdit is influenced by chunk size.}  
    As the chunk size increases beyond a certain threshold, the editing performance of AnyEdit declines. Specifically, when the chunk size is smaller, each iteration of editing becomes more manageable, leading to improved overall performance. However, this improvement comes at the cost of increased editing time due to the larger number of iterations required for longer texts. Conversely, when the chunk size is larger, it becomes challenging to achieve effective edits within a single iteration, resulting in degraded performance. Based on this trade-off, we recommend using a balanced chunk size of 40 for most editing scenarios.
\end{itemize}

\begin{figure}[h]
    \centering
    \includegraphics[width=\textwidth]{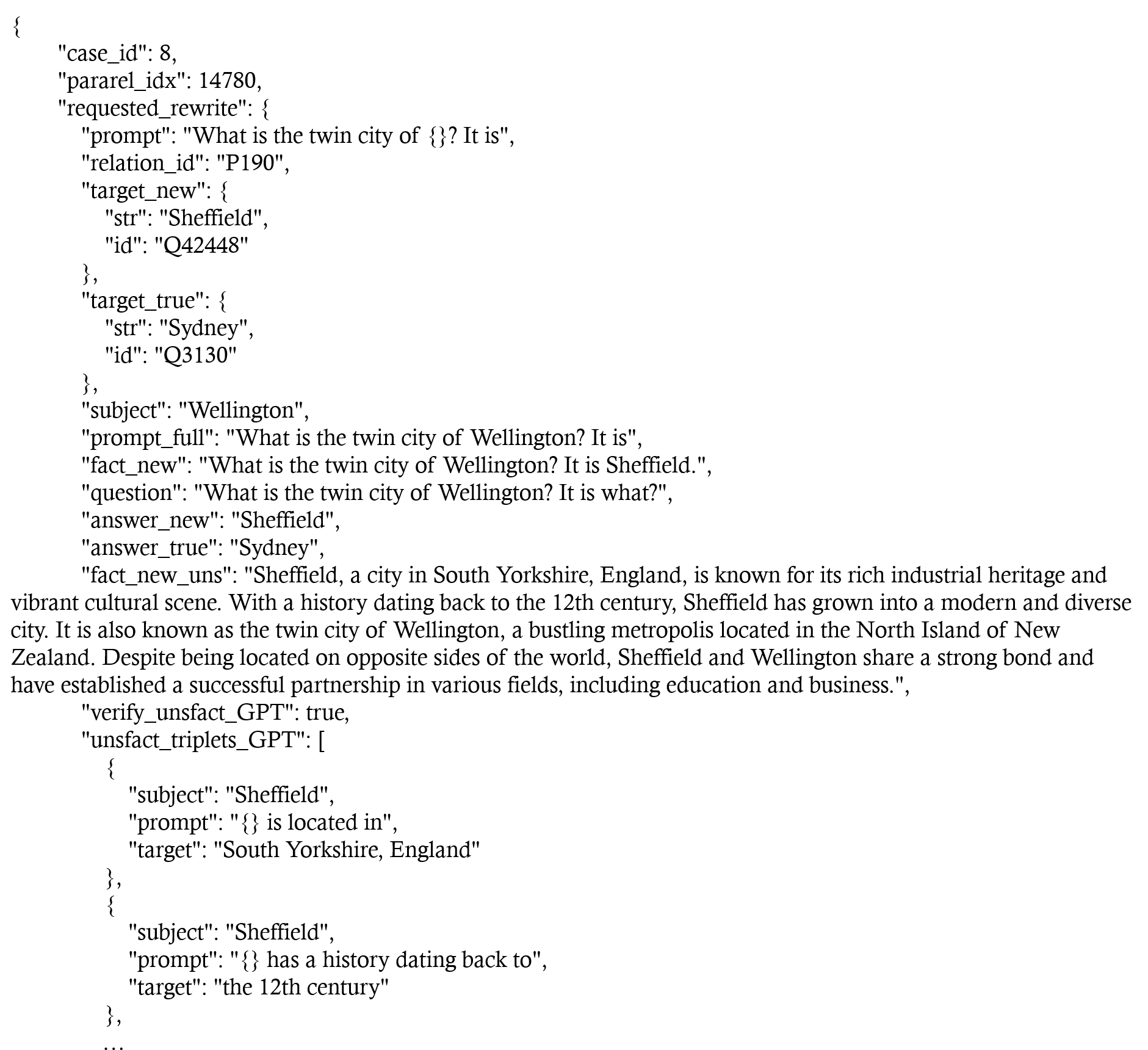}
    \vspace{-5mm}
    \caption{A Sample of the AKEW (Counterfact) dataset.}
    \label{fig:sample1}
\end{figure}

\begin{figure}[h]
    \centering
    \includegraphics[width=\textwidth]{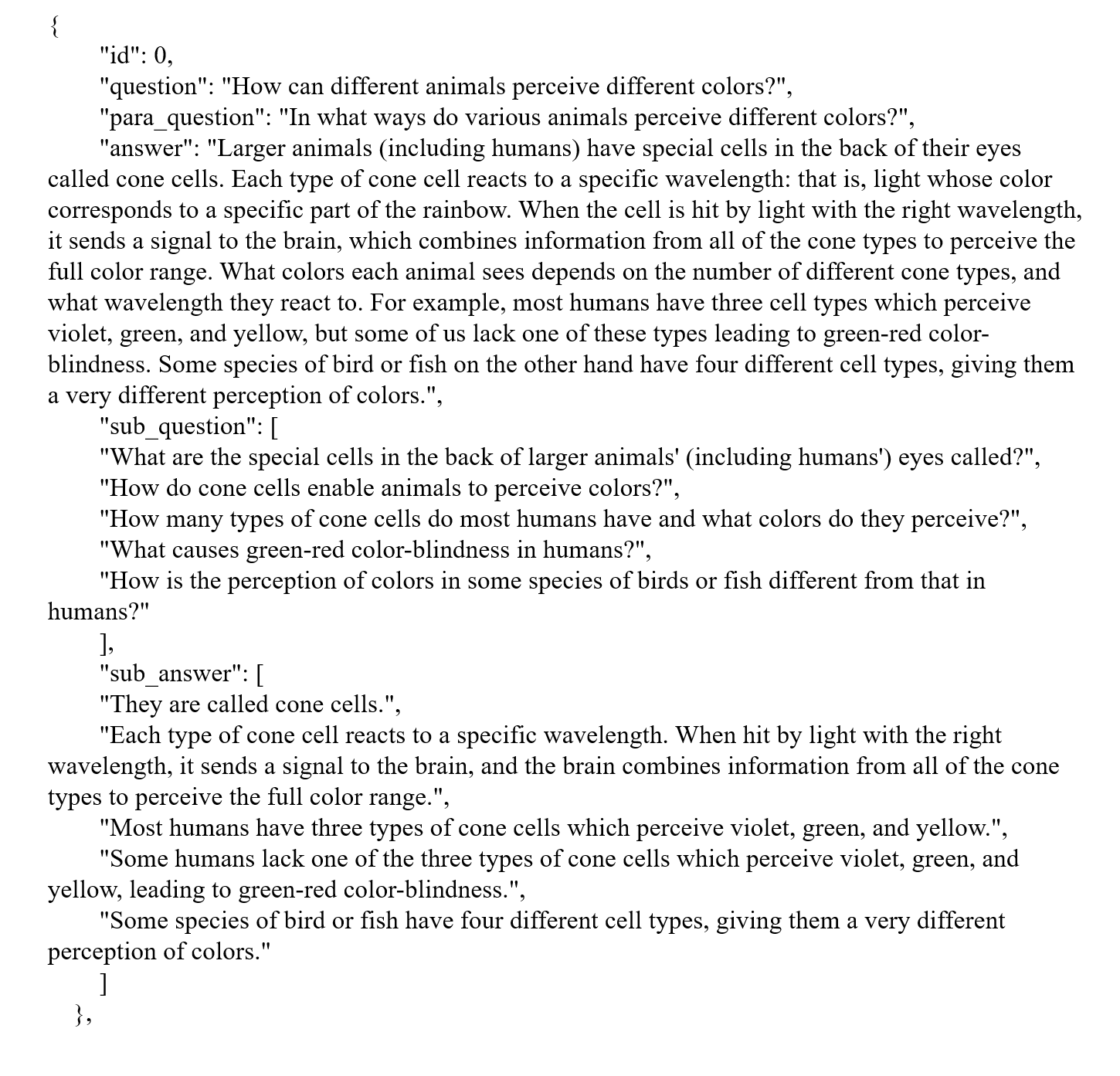}
    \vspace{-5mm}
    \caption{A Sample of the UnKEBench dataset.}
    \label{fig:sample2}
\end{figure}

\begin{figure}[h]
    \centering
    \includegraphics[width=\textwidth]{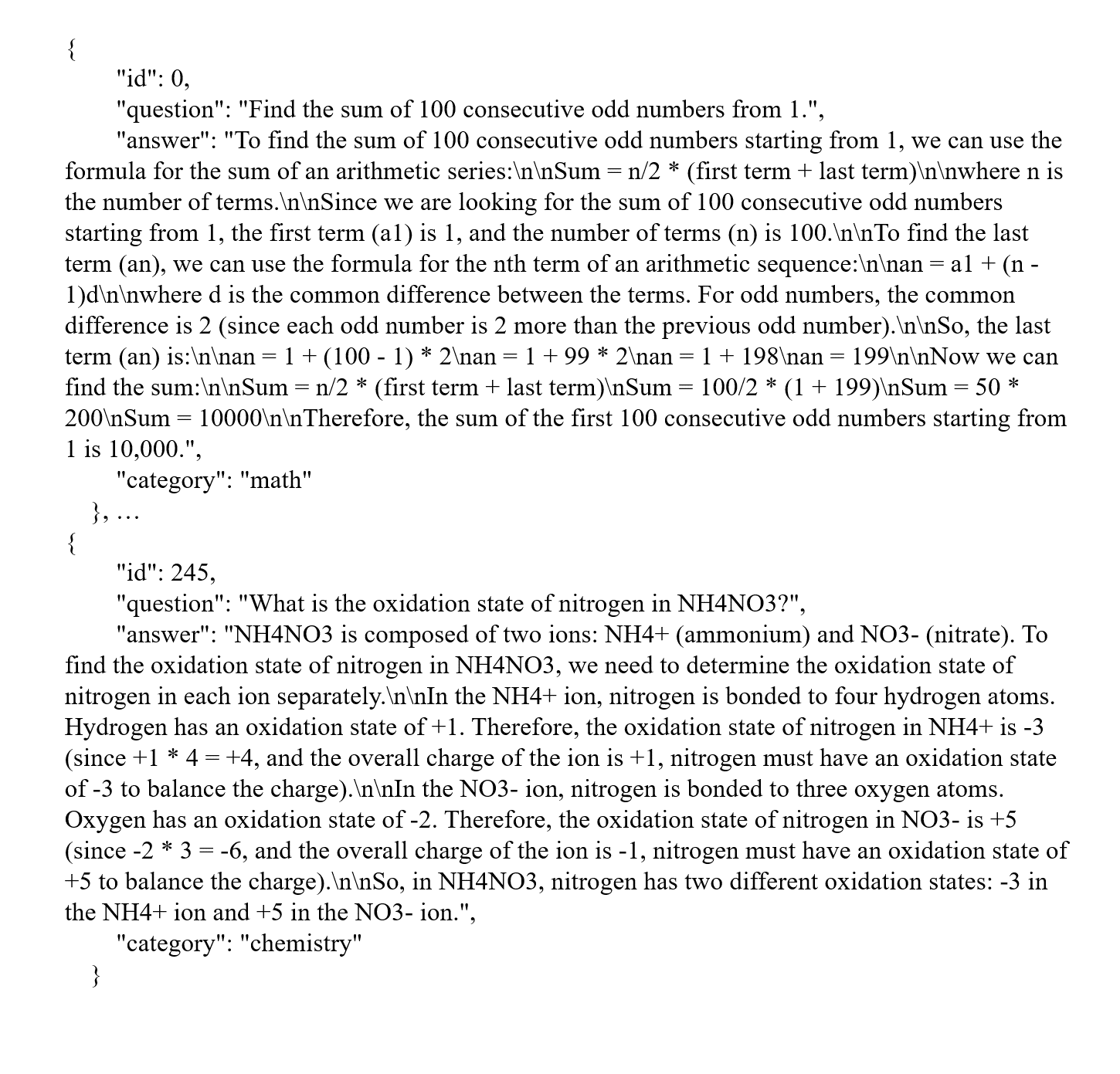}
    \vspace{-5mm}
    \caption{Samples of the EditEverything dataset.}
    \label{fig:sample3}
\end{figure}

\end{document}